\newenvironment{acknowledgement}{\begin{small}\paragraph{Acknowledgements}}%
                           {\end{small}}
\newtheorem{theorem}{Theorem}[section]
\newtheorem{prop}{Proposition}
\newtheorem{lemma}{Lemma}
\let\svthefootnote\thefootnote
\begin{document}

\title{Understanding recent deep-learning techniques for identifying collective variables of molecular dynamics}
\date{}
\author{
  Wei Zhang\,$^{*, \ddagger}$
  \and
Christof Sch\"utte\,$^{*,\ddagger}$
}
\maketitle
\let\thefootnote\relax
\footnotetext{$^*$Zuse Institute Berlin, Takustrasse 7, 14195 Berlin, Germany}
\footnotetext{$^\ddagger$Institute of Mathematics, Freie Universit\"{a}t Berlin, Arnimallee 6, 14195 Berlin, Germany}
\footnotetext{\, Email: wei.zhang@fu-berlin.de, christof.schuette@fu-berlin.de}

\let\thefootnote\svthefootnote
\renewcommand{\thefootnote}{\arabic{footnote}}

\abstract{
High-dimensional metastable molecular system can often be characterised by a few features of the system, i.e.\ collective variables (CVs).
Thanks to the rapid advance in the area of machine learning and deep learning, various deep learning-based CV identification techniques have been developed in recent years, allowing accurate modelling and efficient simulation of complex molecular systems. In this paper, we look at two different categories of deep learning-based approaches for finding CVs, either by computing leading eigenfunctions of infinitesimal generator or transfer operator associated to the underlying dynamics, or by learning an autoencoder via minimisation of reconstruction error. We present a concise
overview of the mathematics behind these two approaches and conduct a comparative numerical study of these two approaches on illustrative examples.
}

  \begin{keywords}
   molecular dynamics, collective variable identification, eigenfunction, autoencoder, variational characterisation, deep learning
  \end{keywords}

\section{Introduction}
\label{sec-intro}

Molecular dynamics (MD) simulation is a mature computational technique for the
study of biomolecular systems. It has proven valuable in a wide range of
applications, e.g.\ understanding functional mechanisms of proteins and discovering new drugs~\cite{Durrant2011-md-drug,HOLLINGSWORTH20181129-md-for-all}. However,
the capability of direct (all-atom) MD simulations is often limited, due to
the disparity between the tiny step-sizes that the simulations have to adopt in
order to ensure numerical stability and the large timescales on which the
functionally relevant conformational changes of biomolecules, such as protein folding, typically occur. 

One general approach to overcome the aforementioned challenge in MD
simulations is by utilizing the fact that in many cases the dynamics of a
high-dimensional metastable molecular system can be characterised by a few
features, i.e.\ collective variables (CVs) of the system. In deed, many
enhanced sampling methods (see~\cite{enhanced-sampling-for-md-review} for a
review) and approaches for building surrogate
models~\cite{perspective-noid-cg,effective_dynamics,effective_dyn_2017,non-markovian-modeling-pfolding-netz}
rely on knowing a set of CVs of the underlying molecular system. 
While empirical approaches and physical/chemical intuition are still widely
adopted in choosing CVs (e.g.\ mass centers, bonds, or angles), it is often
difficult or even impossible to intuit biomolecular systems in real-life
applications due to their high dimensionality, as well as structural and dynamical complexities. 

Thanks to the availability of numerous molecular data being generated and the
rapid advance of machine learning techniques, data-driven automatic
identification of CVs has attracted considerable research interests. Numerous machine
learning-based techniques for CV identification have emerged, such as the well-known principal component analysis (PCA)~\cite{pca-review}, diffusion maps~\cite{COIFMAN20065},
ISOMAP~\cite{isomap}, sketch-map~\cite{sketch-map}, time-lagged independent
component analysis (TICA)~\cite{tica},  as well the
kernel-PCA~\cite{kernel-pca} and kernel-TICA~\cite{kernel-tica} using kernel
techniques. See \cite{NOE2017-cv-review,discovery-mountain-passes-clementi} for reviews. 
The recent developments mostly employ deep learning techniques and largely fall into two categories.
Methods in the first category are based on the operator approach for the study
of stochastic dynamical systems. These include VAMPnets~\cite{vampnet} and the
variant state-free reversible VAMPnets (SRV)~\cite{state-free-vampnets}, the
deep-TICA approach~\cite{deep-tica}, and ISOKANN~\cite{isokann}, which are
capable of learning eigenfunctions of Koopman/transfer operators. The authors
of this paper have also developed a deep learning-based method for learning
eigenfunctions of infinitesimal generator associated to overdamped Langevin dynamics~\cite{eigenpde-by-nn-ZHANG}.
Methods in the second category combine deep learning with dimension reduction techniques, typically by training autoencoders~\cite{ae-kramer}. For instance, several approaches are proposed to
iteratively train autoencoders and improve training data by ``on-the-fly'' enhanced sampling.
These include the Molecular Enhanced Sampling with Autoencoders (MESA)~\cite{enhanced-sampling-with-ae}, Free Energy Biasing and
Iterative Learning with Autoencoders (FEBILAE)~\cite{chasing-cv-using-ae-and-biased-traj}, the method based on the predictive information bottleneck
framework~\cite{Wang2019-past-future-info-bottleneck}, the Spectral Gap Optimisation of Order Parameters (SGOOP)~\cite{spectral-gap-order-parameters-for-sampling-tiwary}, the deep Linear Discriminant Analysis (deep-LDA)~\cite{data-cv-for-enhanced-bonati2020}.
Besides, various generalized autoencoders are proposed, such as the extended autoencoder (EAE) model~\cite{extended-ae-bolhuis}, the time-lagged (variational)
autoencoder~\cite{time-lagged-ae-noe,var-encoding-pande}, Gaussian mixture variational autoencoder~\cite{Bozkurt-Varolg-ne2020}, and EncoderMap~\cite{encodermap}. 

Motivated by these rapid advances, in this paper we study the aforementioned
two categories of deep learning-based approaches for finding CVs, i.e.\
approaches for computing leading eigenfunctions of infinitesimal generator or
transfer operator associated to the underlying dynamics and approaches that
learn an autoencoder via minimisation of reconstruction error. We focus on
theoretical aspects of these approaches in order to gain better understanding
on their capabilities and limitations.  

The remainder of this article is organized as follows. In Section~\ref{sec-eigen}, we present an overview of the approaches for CV identification based on computing eigenfunctions. 
We give a brief introduction to infinitesimal generator and transfer operator,
then we discuss motivations for the use of eigenfunctions as CVs in studying
molecular kinetics, and finally we present variational characterisations as well as loss functions for learning eigenfunctions.
In Section~\ref{sec-ae}, we study autoencoders. We discuss the connection with PCA and present a characterisation of the optimal (time-lagged) autoencoder.
In Section~\ref{sec-examples}, we illustrate the numerical approaches for learning eigenfunctions and autoencoders by applying them to two simple yet illustrative systems.
Appendix~\ref{appsec-proofs} contains the proofs of two lemmas in Section~\ref{sec-eigen}.

\section{Eigenfunctions as CVs for the study of molecular kinetics on large timescales}
\label{sec-eigen}

In this section, we consider eigenfunctions of infinitesimal generator and transfer operator that are associated to the underlying dynamics.
We begin by introducing relevant operators, whose eigenfunctions will be
the focus of this section. After that we present two different perspectives,
which motivate the use of eigenfunctions as CVs to study molecular kinetics on large timescales. Finally, we discuss variational
formulations of leading eigenvalues and eigenfunctions, which will be useful in designing loss functions for training artificial neural networks.

\subsection{Operator approach}
\label{subsec-operators}

\paragraph{Generator} Molecular dynamics can be modelled by stochastic differential equations (SDEs). For both simplicity and mathematical convenience, 
we consider here the following SDE, often called the overdamped Langevin dynamics, 
\begin{equation}
  dX_s = -\nabla V(X_s) ds + \sqrt{2\beta^{-1}} dW_s\,,
  \label{sde-x}
\end{equation}
where $X_s \in \mathbb{R}^d$ is the system's state at time $s\in [0,+\infty)$,
$V: \mathbb{R}^d\rightarrow \mathbb{R}$ is a smooth potential function, $W_s$ is a $d$-dimensional Brownian motion that mimics the effect of noisy environment, and the noise strength
is determined by the parameter $\beta=(k_BT)^{-1}$ that is proportional to the inverse of the system's temperature $T$.
  We assume that dynamics \eqref{sde-x} is ergodic with respect to its unique
  invariant measure 
  \begin{align}
    d\mu(x) = \pi(x)dx, \quad \mathrm{with}~ \pi(x)= \frac{1}{Z} e^{-\beta V(x)}, \quad x\in \mathbb{R}^d\,,   \label{mu-invariant}
  \end{align}
  where $Z$ is a normalising constant.

 The infinitesimal generator of \eqref{sde-x} is a second-order differential operator, defined by
\begin{equation}
  \mathcal{L} f = -\nabla V \cdot \nabla f + \frac{1}{\beta}\Delta f = \frac{1}{\beta} \mathrm{e}^{\beta V} \mbox{div} (\mathrm{e}^{-\beta V} \nabla f)\,,
  \label{generator-l}
\end{equation}
for a test function $f: \mathbb{R}^d\rightarrow \mathbb{R}$. Dynamics \eqref{sde-x} is reversible, and its generator $\mathcal{L}$ is self-adjoint in $L^2(\mu)$ endowed with
the weighted inner product $\langle f, g\rangle_\mu:= \int_{\mathbb{R}^d} fg
d\mu$. In fact, using \eqref{mu-invariant}--\eqref{generator-l} and integration by parts, one can verify that
  \begin{equation}
       \langle (-\mathcal{L}) f, g\rangle_\mu = \langle f, (-\mathcal{L}) g\rangle_\mu = \frac{1}{\beta} \mathbf{E}_\mu \big(\nabla f\cdot \nabla g\big) \,,
       \label{l-selfadjoint}
  \end{equation}
for two $C^2$-smooth test functions $f, g: \mathbb{R}^d\rightarrow
\mathbb{R}$, where $\mathbf{E}_\mu(\cdot)$ denotes the mathematical expectation with respect to the measure $\mu$ in \eqref{mu-invariant}.
We also define the energy 
\begin{equation}
\mathcal{E}(f)=\frac{1}{\beta} \mathbf{E}_\mu \big(|\nabla f|^2\big)\,, \quad f: \mathbb{R}^d\rightarrow \mathbb{R}\,,
  \label{energy-generator}
\end{equation}
which is considered to be $+\infty$ when the right hand side in \eqref{energy-generator} is undefined. Under certain conditions on $V$, the operator $-\mathcal{L}$ has purely discrete spectrum, consisting of a sequence of eigenvalues~\cite{eigenpde-by-nn-ZHANG} 
\begin{equation}
0 = \lambda_0 < \lambda_1 \le \lambda_2 \le \cdots\,,
  \label{eigenvalues-l}
\end{equation}
with the corresponding (orthogonal and normalised) eigenfunctions
$\varphi_0\equiv 1, \varphi_1, \varphi_2, \cdots \in L^2(\mu)$. The leading nontrivial (nonzero) eigenvalues in \eqref{eigenvalues-l} determine the large timescales of the underlying dynamics, whereas the corresponding
eigenfunctions are closely related to metastable conformations.

\paragraph{Transfer operator}
In contrast to the discussion above based on SDEs, transfer operator approach offers an alternative way to study dynamical systems without specifying the
governing equations~\cite{transfer_operator,msm_generation} and is hence widely adopted in developing numerical algorithms. In this framework, one assumes that the trajectory data is sampled from an underlying (equilibrium) system whose
state $y$ at time $t+\tau$ given its state $x$ at time $t$ can be modelled as
a discrete-time Markovian process with transition density $p_\tau(y|x)$, for
all $t\ge 0$, where $\tau>0$ is called the lag-time and the process is assumed to be ergodic with respect to the
unique invariant distribution $\mu$ in \eqref{mu-invariant}. 
The transfer operator associated to this discrete-time Markovian process is
defined as~\cite{msm_generation}
\begin{equation}
  \mathcal{T} u (x) = \frac{1}{\pi(x)} \int_{\mathbb{R}^d} p_\tau(x|y) u(y) \pi(y)
  dy\,, \quad x \in \mathbb{R}^d
  \label{transfer-operator}
\end{equation}
for a density (with respect to $\mu$) $u: \mathbb{R}^d\rightarrow
\mathbb{R}^+$. We assume that the detailed balance condition is satisfied, i.e.\ $p_\tau(y|x)\pi(x) = p_\tau(x|y)\pi(y)$ for all
$x,y\in \mathbb{R}^d$. Then, we can derive
\begin{equation}
  \begin{aligned}
    \mathcal{T} u (x) =& \frac{1}{\pi(x)} \int_{\mathbb{R}^d} p_\tau(x|y) u(y) \pi(y) dy \\
    =& \int_{\mathbb{R}^d} p_\tau(y|x) u(y) dy \\
    =& \mathbf{E}(u(X_{\tau})|X_0 = x)\,, 
  \end{aligned}
  \label{transfer-operator-and-semigroup}
\end{equation}
which shows that in the reversible setting the transfer operator coincides with the semigroup operator (at time $\tau$) associated to the
underlying process~\cite{sz-entropy-2017}~\footnote{In the literature, the
expression in the last line of \eqref{transfer-operator-and-semigroup} is also
used to define Koopman operators for stochastic dynamics~\cite{Wu2020}. We stick
to the notion of transfer operator and note that both operators are identical for reversible
processes. We refer to~\cite{var-koopman-model,Wu2020} and the references therein for the study of stochastic dynamics using Koopman operators.}.
Similar to the generator, one can show that $\mathcal{T}$ is self-adjoint in $L^2(\mu)$ with respect to $\langle\cdot,\cdot\rangle_\mu$ (see \eqref{l-selfadjoint}). 
Also, in analogy to \eqref{energy-generator}, for a function $f\in L^2(\mu)$ we define the energy 
  \begin{equation}
    \mathcal{E}_\tau(f) = \frac{1}{2}\int_{\mathbb{R}^d} \int_{\mathbb{R}^d} \big(f(y) - f(x))^2 p_\tau(y|x) \pi(x) dx dy\,.
    \label{energy-transfer}
    \end{equation}
The following lemma provides an alternative expression of \eqref{energy-transfer} involving the transfer operator $\mathcal{T}$.
\begin{lemma}
  Denote by $I: L^2(\mu)\rightarrow L^2(\mu)$ the identity map.  For all $f \in L^2(\mu)$, we have 
  \begin{equation}
      \mathcal{E}_\tau(f) = \int_{\mathbb{R}^d} \big[(I-\mathcal{T})f(x)\big]
      f(x) d\mu(x) = \langle (I-\mathcal{T})f, f\rangle_\mu\,.
    \end{equation}
    \label{lemma-energy-transfer}
\end{lemma}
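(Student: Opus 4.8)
The plan is to prove the identity by expanding the square $(f(y)-f(x))^2$ inside the double integral \eqref{energy-transfer} and identifying the three resulting terms. Writing $(f(y)-f(x))^2 = f(y)^2 - 2f(x)f(y) + f(x)^2$, I note that the integrand in \eqref{energy-transfer} is nonnegative, so Tonelli's theorem allows me to split the integral into three separate pieces, provided each of them turns out to be finite — which I verify as I go.

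First I would handle the piece carrying $f(x)^2$: integrating in $y$ first and using that $p_\tau(\cdot\,|x)$ is a probability density, i.e.\ $\int_{\mathbb{R}^d} p_\tau(y|x)\,dy = 1$, this piece reduces to $\int_{\mathbb{R}^d} f(x)^2 \pi(x)\,dx = \langle f,f\rangle_\mu$, which is finite since $f\in L^2(\mu)$. Next, for the piece carrying $f(y)^2$, I would invoke the detailed balance assumption $p_\tau(y|x)\pi(x) = p_\tau(x|y)\pi(y)$ to rewrite it, then integrate in $x$ first and again use $\int_{\mathbb{R}^d} p_\tau(x|y)\,dx = 1$; this piece also equals $\int_{\mathbb{R}^d} f(y)^2 \pi(y)\,dy = \langle f,f\rangle_\mu$ (equivalently, this is just the stationarity of $\mu$ under the Markov kernel). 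Finally, for the cross term, integrating in $y$ first and recognizing $\int_{\mathbb{R}^d} f(y) p_\tau(y|x)\,dy = (\mathcal{T}f)(x)$ from the second line of \eqref{transfer-operator-and-semigroup}, I obtain $\int_{\mathbb{R}^d} f(x)\,(\mathcal{T}f)(x)\,d\mu(x) = \langle f, \mathcal{T}f\rangle_\mu$, which is finite because $\mathcal{T}$ is a contraction, hence bounded, on $L^2(\mu)$. Assembling the three pieces gives $\mathcal{E}_\tau(f) = \tfrac12\big(\langle f,f\rangle_\mu - 2\langle f,\mathcal{T}f\rangle_\mu + \langle f,f\rangle_\mu\big) = \langle f,f\rangle_\mu - \langle \mathcal{T}f,f\rangle_\mu = \langle (I-\mathcal{T})f, f\rangle_\mu$, using the self-adjointness of $\mathcal{T}$ only to write $\langle f,\mathcal{T}f\rangle_\mu = \langle \mathcal{T}f,f\rangle_\mu$.

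The computation is essentially routine; the only step needing any care is the measure-theoretic bookkeeping — confirming that Fubini/Tonelli applies and that no infinite quantities are being subtracted. This is immediate here because the original integrand is nonnegative, so Tonelli applies unconditionally, and each of the three pieces is separately finite for $f\in L^2(\mu)$. Thus the main (and rather mild) obstacle is simply ensuring finiteness of the pieces before recombining; once that is in place, the stated identity falls out directly.
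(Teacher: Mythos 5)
Your proof is correct and follows essentially the same route as the paper's: expand the square, use detailed balance and the kernel representation of $\mathcal{T}$ from \eqref{transfer-operator-and-semigroup} to identify the three pieces, and reassemble. The extra attention you pay to Tonelli and the finiteness of each piece is a harmless (and slightly more careful) elaboration of what the paper leaves implicit.
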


The proof of Lemma~\ref{lemma-energy-transfer} is straightforward and we present it in Appendix~\ref{appsec-proofs}. 

Lemma~\ref{lemma-energy-transfer} and \eqref{energy-transfer} imply that all eigenvalues of
$\mathcal{T}$ are no larger than one. We assume that the spectrum of $\mathcal{T}$ consists of discrete eigenvalues  
\begin{align}
  1=\nu_0 > \nu_1 \ge \cdots 
  \label{eigen-transfer-sequence}
\end{align}
and the largest eigenvalue $\nu_0=1$ (corresponding to the trivial eigenfunction $\varphi_0\equiv 1$) is non-degenerate.
These eigenvalues and their corresponding eigenfunctions are of great interest in applications, since they encode information about the timescales and metastable conformations of the underlying dynamics, respectively~\cite{transfer_operator,msm_generation}. 
For the process defined by SDE~\eqref{sde-x}, in particular, the transfer
operator and the generator satisfy $\mathcal{T}=\mathrm{e}^{\tau
\mathcal{L}}$, which implies that the eigenvalues of $\mathcal{T}$ and $-\mathcal{L}$ are related by $\nu_i=\mathrm{e}^{-\tau \lambda_i}$ with identical eigenfunctions $\varphi_i$, for $i\ge 0$~\cite{sz-entropy-2017}.

\subsection{Motivations to use eigenfunctions as CVs}
\label{subsec-motivations}

There is a large amount of literature on the study of eigenfunctions of infinitesimal generator, transfer operator, or Koopman operator. For the
transfer operator $\mathcal{T}$, for instance, many of these studies are
motivated by the connection between the (pairwise orthogonal and normalised) eigenfunctions and the action of $\mathcal{T}$ on test functions $f
\in L^2(\mu)$, i.e.\ in the reversible case, 
\begin{equation}
  \mathcal{T}^n f(x) = 
  \mathbf{E}(f(X_{n\tau})|X_0=x) = 
  \mathbf{E}_\mu(f) + \sum_{i=1}^{+\infty} \langle f, \varphi_i\rangle_\mu
  \nu_i^n \varphi_i(x)\,,\quad x\in \mathbb{R}^d, ~ n = 1,2, \dots\,.
\end{equation}
Since $\nu_1, \nu_2, \dots$ are all smaller than $1$, for large integers $n$,
the function $\mathcal{T}^n f$ is mainly determined by the leading eigenvalues
of $\mathcal{T}$ in \eqref{eigen-transfer-sequence} and the corresponding eigenfunctions. 
Therefore, knowing the leading eigenvalues and eigenfunctions of $\mathcal{T}$ helps study the map $\mathcal{T}^n$ for large $n$, which in turn 
helps understand the behavior of the underlying dynamics at large time $T=n\tau$. For Koopman operator, the leading eigenfunctions define the optimal linear Koopman model for features (functions)~\cite{Wu2020}.

Here, we contribute to this discussion by providing two different perspectives
that directly connect eigenfunctions to the underlying dynamics and to the choices of CVs. 
We assume that the dynamics satisfies SDE~\eqref{sde-x} and we will work with its generator $\mathcal{L}$. Most of the results below can be extended to a more
general setting, e.g.\ overdamped Langevin dynamics with state-dependent diffusion coefficients.
It is also possible to obtain parallel results for the discrete-time Markovian process involving the transfer operator $\mathcal{T}$~\footnote{This is an ongoing work that will be published in future.}.

Let $\xi = (\xi_1, \xi_2, \dots, \xi_k)^\top: \mathbb{R}^d\rightarrow \mathbb{R}^k$ be a smooth CV map, where $1 < k\ll d$. Ito's formula gives 
\begin{equation}
  d\xi(X_s) = \mathcal{L}\xi(X_s)\,ds + \sqrt{2\beta^{-1}} \nabla \xi(X_s) dW_s\,,
  \label{ito-xi}
\end{equation}
where $\nabla \xi(x) \in \mathbb{R}^{k\times d}$ denotes the Jacobian matrix of $\xi$ at $x\in \mathbb{R}^d$.
Given the projection dimension $k$, we are interested in finding a good CV map
$\xi$ that is both non-trivial and non-degenerate. In other words, the components $\xi_1,\xi_2,\dots, \xi_k$ of $\xi$
should be both non-constant and linearly independent. 
These two requirements can be met by imposing the following conditions (no loss of generality)
\begin{equation}
  \mathbf{E}_\mu(\xi_i) = 0 \,, \quad  \langle \xi_i, \xi_j\rangle_\mu =
  \mathbf{E}_\mu(\xi_i\xi_j)=\delta_{ij}\,, \quad 1 \le i \le j \le k\,.
  \label{orthonormality-condition}
\end{equation}

\paragraph{Optimal CVs for the study of slow motions}

For the first perspective, we relate the dynamics of \eqref{sde-x} on large timescales to the slow motions in it. 
This view suggests that a good CV map $\xi$ that captures the behavior of \eqref{sde-x} on large timescales should meet the following criteria:
\begin{center}
  \textit{ $\xi(X_s)$ evolves much more slowly comparing to the dynamics $X_s$ itself. } \qquad (C1)
\label{c1}
\end{center}
Since $\xi(X_s)$ satisfies \eqref{ito-xi}, to meet criteria (C1) it
is therefore natural to require the magnitude of both terms on the right hand side of
\eqref{ito-xi} to be small (in the sense of averages with respect to the
invariant distribution $\mu$ in \eqref{mu-invariant}). This can be formulated as an optimisation problem
\begin{equation}
  \min_{\xi_1, \dots, \xi_k} \sum_{i=1}^k \omega_i \int_{\mathbb{R}^d} \Big(|\mathcal{L}\xi_i|^2(x)
  + |\nabla\xi_i|^2(x)\Big) d\mu(x),  \quad \mbox{subject to \eqref{orthonormality-condition}}\,,
  \label{task-general}
\end{equation}
where $\omega_1 \ge \omega_2 \ge \dots \ge \omega_k >0$ are weights assigned
to the $k$ equations in \eqref{ito-xi}. One can choose the weights to be identical,
but using pairwise distinct weights could help eliminate non-uniqueness of the optimiser of \eqref{task-general} due to permutations.
We make the following claim concerning the optimiser of \eqref{task-general}.
\begin{prop}
  Assume that $-\mathcal{L}$ has purely discrete spectrum consisting of the eigenvalues in \eqref{eigenvalues-l}.
  Then, the minimum of \eqref{task-general} is attained by the first $k$ (non-trivial) eigenfunctions of $-\mathcal{L}$, i.e.\ when $\xi_i=\varphi_i$ for $i=1,\dots, k$.
  \label{prop-1}
\end{prop}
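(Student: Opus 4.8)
The plan is to reduce this vector-valued problem to a weighted sum of scalar quadratic forms for a single self-adjoint operator, and then to invoke a Ky-Fan-type minimum principle combined with Abel summation to accommodate the weights. The first step is to rewrite the objective: by the self-adjointness identity \eqref{l-selfadjoint} one has $\int_{\mathbb{R}^d}|\nabla\xi_i|^2\,d\mu=\beta\,\langle(-\mathcal{L})\xi_i,\xi_i\rangle_\mu$, and since $\mathcal{L}$ is self-adjoint in $L^2(\mu)$ also $\int_{\mathbb{R}^d}|\mathcal{L}\xi_i|^2\,d\mu=\langle(-\mathcal{L})^2\xi_i,\xi_i\rangle_\mu$. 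Hence, for each $i$, the $i$-th integrand in \eqref{task-general} equals $\langle A\xi_i,\xi_i\rangle_\mu$ with $A:=(-\mathcal{L})^2+\beta(-\mathcal{L})=(-\mathcal{L})\big((-\mathcal{L})+\beta I\big)$, a non-negative self-adjoint operator sharing the eigenfunctions $\varphi_0\equiv1,\varphi_1,\dots$ of $-\mathcal{L}$ with eigenvalues $\Lambda_j:=\lambda_j(\lambda_j+\beta)$. Since $\beta>0$ and $0=\lambda_0<\lambda_1\le\lambda_2\le\cdots$, the map $t\mapsto t(t+\beta)$ is increasing on $[0,\infty)$, so $0=\Lambda_0<\Lambda_1\le\Lambda_2\le\cdots$. (If some $\xi_i$ fails to lie in the form domain of $A$, the objective is $+\infty$ and there is nothing to prove, so one may assume all quantities are finite.)

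Next I would expand the $\xi_i$ in the eigenbasis. The constraint $\mathbf{E}_\mu(\xi_i)=0$ forces $\xi_i\perp\varphi_0$, so writing $\xi_i=\sum_{j\ge1}c_{ij}\varphi_j$ with $c_{ij}=\langle\xi_i,\varphi_j\rangle_\mu$, the orthonormality condition \eqref{orthonormality-condition} becomes $\sum_{j\ge1}c_{ij}c_{lj}=\delta_{il}$ and the objective becomes $\sum_{i=1}^k\omega_i\,g_i$ with $g_i:=\sum_{j\ge1}c_{ij}^2\Lambda_j$. The key estimate to establish is that $\sum_{i=1}^l g_i\ge\sum_{j=1}^l\Lambda_j$ for every $1\le l\le k$: setting $b_j:=\sum_{i=1}^l c_{ij}^2$, orthonormality of $\xi_1,\dots,\xi_l$ gives $b_j=\langle P\varphi_j,\varphi_j\rangle_\mu$ for $P$ the orthogonal projection onto $\mathrm{span}\{\xi_1,\dots,\xi_l\}$, so $0\le b_j\le1$ and $\sum_{j\ge1}b_j=\mathrm{tr}\,P=l$, whence a one-line rearrangement exploiting $\Lambda_1\le\Lambda_2\le\cdots$ yields $\sum_{j\ge1}b_j\Lambda_j-\sum_{j=1}^l\Lambda_j=\sum_{j=1}^l(b_j-1)\Lambda_j+\sum_{j>l}b_j\Lambda_j\ge\Lambda_l\big(\sum_{j\ge1}b_j-l\big)=0$. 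Then, with $\omega_{k+1}:=0$, Abel summation and $\omega_1\ge\cdots\ge\omega_k>0$ give $\sum_{i=1}^k\omega_i g_i=\sum_{l=1}^k(\omega_l-\omega_{l+1})\sum_{i=1}^l g_i\ge\sum_{l=1}^k(\omega_l-\omega_{l+1})\sum_{j=1}^l\Lambda_j=\sum_{i=1}^k\omega_i\Lambda_i$, with equality for $\xi_i=\varphi_i$ (then $c_{ij}=\delta_{ij}$); so the minimum equals $\sum_{i=1}^k\omega_i\lambda_i(\lambda_i+\beta)$ and is attained by the first $k$ non-trivial eigenfunctions.

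The only genuinely non-trivial point is the weighted Ky-Fan inequality in the middle step; the remainder is spectral bookkeeping. I expect the projection/mass-transfer argument sketched above, rather than quoting a named theorem, to be the cleanest route, and it also makes visible why the ordering $\omega_1\ge\cdots\ge\omega_k$ enters: with equal weights the value $\sum_i\omega_i\Lambda_i$ is attained by \emph{any} orthonormal system spanning $\mathrm{span}\{\varphi_1,\dots,\varphi_k\}$, whereas strictly decreasing weights pin down $\xi_i=\varphi_i$ up to signs, matching the remark after \eqref{task-general}. One should also record the routine measure-theoretic caveats — well-definedness of the expansion and validity of $\langle A\xi_i,\xi_i\rangle_\mu=\sum_{j\ge1}\Lambda_j c_{ij}^2$ in $[0,+\infty]$ — which follow from $A$ being a non-negative self-adjoint operator with purely discrete spectrum.
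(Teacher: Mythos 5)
Your proposal is correct and follows essentially the same route as the paper: rewrite the objective via \eqref{l-selfadjoint} as $\sum_{i=1}^k\omega_i\langle[(-\mathcal{L})^2+\beta(-\mathcal{L})]\xi_i,\xi_i\rangle_\mu$ and observe that this operator shares the eigenfunctions of $-\mathcal{L}$ with eigenvalues $\lambda_j(\lambda_j+\beta)$ in the same order, so the weighted trace-minimum principle of Theorem~\ref{thm-variational-form} applies. The only difference is that you carry out the Ky-Fan/Abel-summation argument explicitly (correctly, including the projection bound $0\le b_j\le 1$, $\sum_j b_j=l$), whereas the paper defers this step to the proof of Theorem~\ref{thm-variational-form} in \cite{eigenpde-by-nn-ZHANG}.
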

\begin{proof}
  Using the identities in \eqref{l-selfadjoint}, one can reformulate the optimisation problem \eqref{task-general} as 
\begin{equation}
  \min_{\xi_1, \dots, \xi_k} \sum_{i=1}^k \omega_i \big\langle [(-\mathcal{L})^2 + \beta (-\mathcal{L})] \xi_i, \xi_i\big\rangle_\mu,  \quad \mbox{subject to \eqref{orthonormality-condition}}\,.
  \label{task-general-reformulated}
\end{equation}
The conclusion follows once we show that the minimum of \eqref{task-general-reformulated} is attained when $\xi_i=\varphi_i$ for $i=1,\dots, k$.
  This can be achieved straightforwardly by repeating the proof of Theorem~\ref{thm-variational-form} in Section~\ref{subsec-var} below (the proof is given in~\cite{eigenpde-by-nn-ZHANG}) for the operator $(-\mathcal{L})^2 + \beta (-\mathcal{L})$
  and using the fact that both $(-\mathcal{L})^2 + \beta (-\mathcal{L})$ and $-\mathcal{L}$ have the same set of eigenfunctions.
\end{proof}

It is not difficult to see that the eigenfunctions $\varphi_1, \dots, \varphi_k$ actually minimise both terms in the objective \eqref{task-general} simultaneously (subject to \eqref{orthonormality-condition}). 
The following identity provides an explicit expression for the first term in \eqref{task-general}, which involves the operator $(-\mathcal{L})^2$. 

\begin{lemma}
  For any smooth function $f:\mathbb{R}^d\rightarrow \mathbb{R}$ such that $\langle (-\mathcal{L})^2 f, f\rangle_\mu <+\infty$, we have 
  \begin{align}
    \int_{\mathbb{R}^d} |\mathcal{L}f|^2 d\mu = \langle (-\mathcal{L})^2 f, f\rangle_\mu = \frac{1}{\beta}
    \int_{\mathbb{R}^d} \Big[\mathrm{Hess}V(\nabla f, \nabla f) + \frac{1}{\beta}|\nabla^2 f|^2_F \Big] d\mu\,, 
    \label{nice-identity}
  \end{align}
  where $\mathrm{Hess}V(\cdot, \cdot)$ is the Hessian operator of the
  potential $V$ and $|\nabla^2 f|_F$ denotes the Frobenius norm of the matrix $\nabla^2 f$.
\label{lemma-nice-identity}
\end{lemma}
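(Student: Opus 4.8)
The plan is to prove the two equalities in turn. The first one, $\int_{\mathbb{R}^d}|\mathcal{L}f|^2\,d\mu=\langle(-\mathcal{L})^2 f,f\rangle_\mu$, is immediate from the self-adjointness of $\mathcal{L}$ in $L^2(\mu)$ recorded in \eqref{l-selfadjoint}: since $-\mathcal{L}$ is symmetric for $\langle\cdot,\cdot\rangle_\mu$, we have $\langle(-\mathcal{L})^2 f,f\rangle_\mu=\langle(-\mathcal{L})f,(-\mathcal{L})f\rangle_\mu=\int_{\mathbb{R}^d}|\mathcal{L}f|^2\,d\mu$, and the hypothesis $\langle(-\mathcal{L})^2 f,f\rangle_\mu<+\infty$ ensures that $\mathcal{L}f\in L^2(\mu)$, so every quantity below is finite.

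For the second equality I would use an integrated Bochner (Bakry--Émery $\Gamma_2$) computation. The pointwise starting point is the identity
\[
  \tfrac12\,\mathcal{L}\!\Big(\tfrac1\beta|\nabla f|^2\Big)-\tfrac1\beta\,\nabla f\cdot\nabla(\mathcal{L}f)=\tfrac1\beta\,\mathrm{Hess}V(\nabla f,\nabla f)+\tfrac1{\beta^2}\,|\nabla^2 f|_F^2\,,
\]
which one obtains from the classical Bochner formula $\tfrac12\Delta|\nabla f|^2=|\nabla^2 f|_F^2+\nabla f\cdot\nabla(\Delta f)$ by applying the product and chain rules to the drift part $-\nabla V\cdot\nabla$ of $\mathcal{L}$; the terms containing $\nabla V$ regroup precisely into $\mathrm{Hess}V(\nabla f,\nabla f)$ plus a piece that completes $\nabla f\cdot\nabla(\mathcal{L}f)$. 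I would then integrate this identity against $\mu$: the term $\int_{\mathbb{R}^d}\mathcal{L}(\cdot)\,d\mu$ vanishes because $\mu$ is invariant for \eqref{sde-x} (equivalently by \eqref{l-selfadjoint} with $g\equiv1$), so the left-hand side reduces to $-\tfrac1\beta\int_{\mathbb{R}^d}\nabla f\cdot\nabla(\mathcal{L}f)\,d\mu$, which by \eqref{l-selfadjoint} applied to the pair $(f,\mathcal{L}f)$ equals $\langle(-\mathcal{L})f,(-\mathcal{L})f\rangle_\mu=\langle(-\mathcal{L})^2 f,f\rangle_\mu$; the right-hand side is exactly $\tfrac1\beta\int_{\mathbb{R}^d}\big[\mathrm{Hess}V(\nabla f,\nabla f)+\tfrac1\beta|\nabla^2 f|_F^2\big]\,d\mu$. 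An equivalent route, avoiding the $\Gamma_2$ formalism, is to start from $\mathcal{L}f=\tfrac1\beta\mathrm{e}^{\beta V}\mathrm{div}(\mathrm{e}^{-\beta V}\nabla f)$ and integrate by parts twice against the weight $\mathrm{e}^{-\beta V}$.

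The hard part is not the algebra but justifying the integrations by parts for a general smooth $f$: the pointwise identity is exact, but integrating it requires that no boundary contributions at infinity survive and that each integrand lies in $L^1(\mu)$. I would first establish the identity on a dense class of sufficiently fast-decaying functions (e.g.\ $C_c^\infty(\mathbb{R}^d)$, or the operator core used in \cite{eigenpde-by-nn-ZHANG}) by inserting smooth cutoffs $\chi_R$ and letting $R\to+\infty$, and then extend it to all $f$ with $\langle(-\mathcal{L})^2 f,f\rangle_\mu<+\infty$ by closure: the hypothesis bounds $\|(-\mathcal{L})f\|_\mu$, the identity then forces $\nabla^2 f\in L^2(\mu)$, and $\mathrm{Hess}V(\nabla f,\nabla f)\in L^1(\mu)$ because it is the difference of two terms already known to be integrable (alternatively, one assumes a mild growth bound on $\nabla^2 V$ so that this integrability is clear a priori). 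Continuity of both sides of the identity in the graph norm of $-\mathcal{L}$ then gives the result in the stated generality.
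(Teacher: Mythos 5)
Your argument is correct and is essentially the paper's own proof: both rest on the classical Bochner identity, the self-adjointness relation \eqref{l-selfadjoint} to rewrite $\int|\mathcal{L}f|^2\,d\mu$ as $-\tfrac1\beta\int\nabla f\cdot\nabla(\mathcal{L}f)\,d\mu$, and the vanishing of $\int\mathcal{L}(|\nabla f|^2)\,d\mu$ by invariance of $\mu$; your $\Gamma_2$ packaging is just a reorganisation of the same algebra. The extra care you take with cutoffs and the density/closure argument goes beyond the paper, which carries out the computation formally.
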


The proof of Lemma~\ref{lemma-nice-identity} is given in Appendix~\ref{appsec-proofs}.
The integrand of the rightmost integral in \eqref{nice-identity} consists of the Hessian of $V$ and a regularising term.
Loosely speaking, since the eigenfunctions minimise \eqref{nice-identity}
subject to \eqref{orthonormality-condition}, \eqref{nice-identity} reveals the
connection between the (global) eigenfunctions of $(-\mathcal{L})$ and the (local) eigenvectors of the Hessian of the potential $V$. 

A final remark on \eqref{task-general} is that it does not rely on the
specific form of the SDE. Therefore, in principle it can be used as a criteria
of good CVs in the case where the SDE has a more general form, e.g.\ a
non-reversible SDE or underdamped Langevin dynamics. In these general
settings, it is interesting to study whether \eqref{task-general} can be solved efficiently using approaches such as physics-informed neural networks (PINN)~\cite{pinn}.

\paragraph{Optimal CVs for building effective dynamics}

The second perspective is inspired by the study of effective dynamics of \eqref{sde-x} using conditional expectations~\cite{effective_dynamics}.
Specifically, note that the SDE \eqref{ito-xi} of $\xi(X_s)$ is non-closed, in the sense
that terms on its right hand side still depend on the full state $X_s \in \mathbb{R}^d$.
The authors in~\cite{effective_dynamics} proposed an effective dynamics as a Markovian approximation of \eqref{ito-xi}, which is described by the SDE 
\begin{align}
    dz(s) = \widetilde{b}(z(s)) \,ds + \sqrt{2\beta^{-1}} \widetilde{\sigma}(z(s)) \,d\widetilde{w}(s)\,,
    \label{eff-dynamics}
\end{align}
where $\widetilde{w}(s)$ is a $k$-dimensional Brownian motion, the coefficients $\widetilde{b}: \mathbb{R}^k \rightarrow \mathbb{R}^k$ and $\widetilde{\sigma}\in \mathbb{R}^{k\times k}$ are defined by 
\begin{align*}
  \widetilde{b}_l(z) = \mathbf{E}_{\mu_z}(\mathcal{L}\xi_l)\,,~ 1 \le l \le k, \quad 
    (\widetilde{\sigma}\widetilde{\sigma}^\top)(z) = \mathbf{E}_{\mu_z}\big(\nabla \xi \nabla \xi^\top \big)\,, \quad \mathrm{for}~ z \in \mathbb{R}^k\,,
\end{align*}
respectively. In the above, for $z \in \mathbb{R}^k$, $\mathbf{E}_{\mu_z}(\cdot)$ denotes the conditional expectation on the level set $\Sigma_{z} = \big\{x\in \mathbb{R}^d \big|\xi(x) = z\big\}$ with respect to the so-called conditional measure $\mu_z$ on $\Sigma_z$:
\begin{align}
  \begin{split}
    d\mu_z(x) =& \frac{1}{Q(z)} \frac{e^{-\beta V(x)}}{Z} \Big[\det(\nabla \xi\nabla \xi^\top)(x)\Big]^{-\frac{1}{2}}\, d\nu_z(x) \\
    =&  \frac{1}{Q(z)} \frac{e^{-\beta V(x)}}{Z} \delta(\xi(x)-z) dx\,,
  \end{split}
  \label{mu-z}
  \end{align}
  where the first equality follows from the co-area formula, $\delta(\cdot)$ denotes the Dirac delta function, $Q(z)$ is a normalising constant, and $\nu_z$ denotes the surface measure on $\Sigma_z$. We refer to
  \cite{effective_dynamics,effective_dyn_2017} for detailed discussions about the definition and properties of the effective dynamics~\eqref{eff-dynamics}.

  Note that the effective dynamics \eqref{eff-dynamics} can be defined with a
  general CV map $\xi$. A natural question is how to choose $\xi$ such that
  the resulting effective dynamics is a good approximation of the original
  dynamics~\cite{Duong_2018,LEGOLL2017-pathwise,upanshu-nonreversible-2018,pathwiseEff-2018}.
  One way to quantify the approximation quality of \eqref{eff-dynamics} is by
  comparing its timescales to the timescales of the original dynamics~\cite{effective_dyn_2017,sz-entropy-2017}.
  For the overdamped Langevin dynamics \eqref{sde-x}, in particular, the infinitesimal generator of its effective dynamics \eqref{eff-dynamics}, denoted by
  $\widetilde{\mathcal{L}}$, is again self-adjoint in an appropriate Hilbert space~\cite{effective_dyn_2017}. Assume that $-\widetilde{\mathcal{L}}$ has purely discrete spectrum, which consists of eigenvalues $0=\widetilde{\lambda}_0 < \widetilde{\lambda}_1 \le \widetilde{\lambda}_2
  \le \cdots$, and let $\widetilde{\varphi}_i: \mathbb{R}^k\rightarrow \mathbb{R}$ be the corresponding orthonormal eigenfunctions. 
  The following result estimates the approximation error of the effective dynamics in terms of eigenvalues.
  \begin{prop}[\cite{effective_dyn_2017}]
    Recall the energy $\mathcal{E}$ defined in \eqref{energy-generator}. For $i=1,2,\dots$, we have 
  \begin{align}
    \lambda_i \le \widetilde{\lambda}_i  \le \lambda_i + \mathcal{E}(\varphi_i - \widetilde{\varphi}_i \circ \xi)\,.
 \end{align}
  In particular, when $\xi(x) = \big(\varphi_1(x), \varphi_2(x), \cdots, \varphi_k(x)\big)^\top \in \mathbb{R}^k$, we have $\widetilde{\lambda}_i = \lambda_i$, for\, $1 \le i \le k$.
    \label{prop-eff-eigen}
\end{prop}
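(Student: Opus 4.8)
The plan is to reduce everything to the Courant--Fischer (min--max) characterisation of the eigenvalues of $-\mathcal{L}$ and $-\widetilde{\mathcal{L}}$, together with one structural identity: the Dirichlet form of the effective dynamics is exactly the pull-back under $\xi$ of the Dirichlet form $\mathcal{E}$ of \eqref{sde-x}. Write $\widetilde{\mu}:=\xi_*\mu$ for the law of $\xi(X)$ under $\mu$, which is the invariant measure of \eqref{eff-dynamics}, and $\widetilde{\mathcal{E}}(g):=\langle(-\widetilde{\mathcal{L}})g,g\rangle_{\widetilde{\mu}}$; since $\widetilde{\mathcal{L}}$ is self-adjoint in $L^2(\widetilde{\mu})$ with effective diffusion matrix $\mathbf{E}_{\mu_z}(\nabla\xi\nabla\xi^\top)$, one has $\widetilde{\mathcal{E}}(g)=\tfrac1\beta\int_{\mathbb{R}^k}(\nabla g)^\top\mathbf{E}_{\mu_z}(\nabla\xi\nabla\xi^\top)(\nabla g)\,d\widetilde{\mu}$. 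For $g:\mathbb{R}^k\to\mathbb{R}$ the chain rule gives $\nabla(g\circ\xi)(x)=\nabla\xi(x)^\top(\nabla g)(\xi(x))$, so $|\nabla(g\circ\xi)(x)|^2=(\nabla g)(\xi(x))^\top\nabla\xi(x)\nabla\xi(x)^\top(\nabla g)(\xi(x))$; disintegrating $d\mu=d\mu_z\,d\widetilde{\mu}(z)$ through the co-area formula \eqref{mu-z} and integrating first over the level set $\Sigma_z$ yields
\[ \mathcal{E}(g\circ\xi)=\widetilde{\mathcal{E}}(g),\qquad \|g\circ\xi\|_\mu=\|g\|_{\widetilde{\mu}},\qquad \mathbf{E}_\mu(g\circ\xi)=\mathbf{E}_{\widetilde{\mu}}(g). \]
Hence $g\mapsto g\circ\xi$ is a linear isometry $L^2(\widetilde{\mu})\hookrightarrow L^2(\mu)$ preserving energies and sending mean-zero functions to mean-zero functions.

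For the lower bound I would use that, on the mean-zero subspaces, $\lambda_i=\min\{\max_{0\ne f\in V}\mathcal{E}(f)/\|f\|_\mu^2:V\subseteq L^2_0(\mu),\ \dim V=i\}$ and analogously for $\widetilde{\lambda}_i$ with $\widetilde{\mathcal{E}}$ on $L^2_0(\widetilde{\mu})$ (legitimate since $-\mathcal{L},-\widetilde{\mathcal{L}}$ have purely discrete spectra and $\lambda_0=\widetilde{\lambda}_0=0$ with constant eigenfunction). Lifting the optimiser $\widetilde{V}^*=\mathrm{span}\{\widetilde{\varphi}_1,\dots,\widetilde{\varphi}_i\}$ to $V:=\{g\circ\xi:g\in\widetilde{V}^*\}\subseteq L^2_0(\mu)$, still $i$-dimensional by injectivity, gives $\lambda_i\le\max_{0\ne g\in\widetilde{V}^*}\mathcal{E}(g\circ\xi)/\|g\circ\xi\|_\mu^2=\max_{0\ne g\in\widetilde{V}^*}\widetilde{\mathcal{E}}(g)/\|g\|_{\widetilde{\mu}}^2=\widetilde{\lambda}_i$.

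For the upper bound, take $\varphi_i,\widetilde{\varphi}_i$ normalised, set $h:=\widetilde{\varphi}_i\circ\xi$ (so $\|h\|_\mu=1$ and, by the identity above, $\widetilde{\lambda}_i=\widetilde{\mathcal{E}}(\widetilde{\varphi}_i)=\mathcal{E}(h)$), write $h=\varphi_i+(h-\varphi_i)$, and expand the quadratic form $\mathcal{E}$ via its bilinear form $\mathcal{E}(f,g)=\langle(-\mathcal{L})f,g\rangle_\mu$:
\[ \widetilde{\lambda}_i=\mathcal{E}(\varphi_i)+2\,\mathcal{E}(\varphi_i,h-\varphi_i)+\mathcal{E}(h-\varphi_i)=\lambda_i+2\lambda_i\langle\varphi_i,h-\varphi_i\rangle_\mu+\mathcal{E}(h-\varphi_i). \]
By Cauchy--Schwarz $\langle\varphi_i,h-\varphi_i\rangle_\mu=\langle\varphi_i,h\rangle_\mu-1\le\|\varphi_i\|_\mu\|h\|_\mu-1=0$, and $\lambda_i\ge0$, so the cross term is $\le0$ and $\widetilde{\lambda}_i\le\lambda_i+\mathcal{E}(h-\varphi_i)=\lambda_i+\mathcal{E}(\varphi_i-\widetilde{\varphi}_i\circ\xi)$. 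For the special case $\xi=(\varphi_1,\dots,\varphi_k)^\top$, the coordinate maps $\pi_j(z)=z_j$ satisfy $\pi_j\circ\xi=\varphi_j$, so $\pi_1,\dots,\pi_k$ are orthonormal in $L^2_0(\widetilde{\mu})$ with $\widetilde{\mathcal{E}}(\pi_j,\pi_l)=\mathcal{E}(\varphi_j,\varphi_l)=\lambda_j\delta_{jl}$; inserting $\widetilde{V}=\mathrm{span}\{\pi_1,\dots,\pi_i\}$ into the min--max for $\widetilde{\lambda}_i$ gives $\widetilde{\lambda}_i\le\lambda_i$, which with the first inequality forces $\widetilde{\lambda}_i=\lambda_i$ for $1\le i\le k$ (indeed $\widetilde{\mathcal{L}}\pi_j=-\lambda_j\pi_j$, since $\mathbf{E}_{\mu_z}(\varphi_j)=z_j$ on $\Sigma_z$ in this case).

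The main obstacle is establishing the pull-back identity $\mathcal{E}(g\circ\xi)=\widetilde{\mathcal{E}}(g)$ cleanly: it relies on the precise definition of the effective diffusion coefficient as the conditional average $\mathbf{E}_{\mu_z}(\nabla\xi\nabla\xi^\top)$ and on a careful application of the co-area disintegration of $\mu$, and one must also check that the lifted functions $\widetilde{\varphi}_i\circ\xi$ and the lifted test spaces actually lie in the form domain of $-\mathcal{L}$ so that the min--max comparison is valid; given these, the remaining steps are routine manipulations with the quadratic form and Cauchy--Schwarz.
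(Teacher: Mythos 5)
Your argument is correct. Note that the paper itself gives no proof of Proposition~\ref{prop-eff-eigen}; it is quoted from \cite{effective_dyn_2017}, and what you have written is essentially the standard argument from that reference: the pull-back identity $\mathcal{E}(g\circ\xi)=\widetilde{\mathcal{E}}(g)$ together with the isometry $\|g\circ\xi\|_\mu=\|g\|_{\widetilde{\mu}}$ turns the Courant--Fischer min--max over lifted subspaces into the lower bound $\lambda_i\le\widetilde{\lambda}_i$, and the expansion of $\mathcal{E}$ around $\varphi_i$ with the sign of the cross term controlled by Cauchy--Schwarz gives the upper bound; the special case follows by testing with the coordinate functions $\pi_j$. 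Two ingredients you correctly flag as imported rather than proved are (i) that $\xi_*\mu$ is the invariant measure of \eqref{eff-dynamics} and (ii) that the self-adjointness of $\widetilde{\mathcal{L}}$ forces the divergence-form structure $\widetilde{b}=\tfrac{1}{\beta Q}\mathrm{div}\bigl(Q\,\widetilde{\sigma}\widetilde{\sigma}^\top\bigr)$, which is what gives $\langle(-\widetilde{\mathcal{L}})g,g\rangle_{\widetilde{\mu}}=\tfrac{1}{\beta}\int(\nabla g)^\top\mathbf{E}_{\mu_z}(\nabla\xi\nabla\xi^\top)\nabla g\,d\widetilde{\mu}$; both are established in \cite{effective_dyn_2017}, and the paper explicitly licenses (ii) by asserting the self-adjointness of $\widetilde{\mathcal{L}}$. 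With those granted, your derivation is complete and the final observation $\widetilde{\mathcal{L}}\pi_j=-\lambda_j\pi_j$ is a nice direct confirmation of the equality case.
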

Proposition~\ref{prop-eff-eigen} implies that, for a general CV map $\xi$, the
eigenvalues associated to the effective dynamics are always larger or equal to the
corresponding true eigenvalues, and the approximation error depends on the closeness between the corresponding
eigenfunctions (measured by the energy $\mathcal{E}$).
Also, choosing eigenfunctions associated to the original dynamics as the CV map $\xi$ yields the optimal effective
dynamics \eqref{eff-dynamics}, in the sense that it preserves the corresponding eigenvalues (timescales). 

\subsection{From variational characterisations to loss functions}
\label{subsec-var}

In the following, we discuss variational characterisations of eigenfunctions for both generator and transfer operator.
These characterisations are useful in developing numerical algorithms~\cite{frank_feliks_mms2013,feliks_variational_tensor2016}, in
particular in designing loss functions in recent deep learning-based approaches~\cite{vampnet,state-free-vampnets,eigenpde-by-nn-ZHANG}.

For the generator $\mathcal{L}$, note that we have already given a variational characterisation of the leading eigenfunctions $\varphi_1, \dots, \varphi_k$ 
in \eqref{task-general} thanks to Proposition~\ref{prop-1}.
However, as mentioned in Section~\ref{subsec-motivations}, the leading
eigenfunctions actually minimise both terms in \eqref{task-general} simultaneously and a simpler characterisation is preferred for numerical purposes.
In this regard, we record the following characterisation obtained in~\cite{sz-entropy-2017,eigenpde-by-nn-ZHANG}.
  \begin{theorem}
Let $k\in \mathbb{N}$ and $\omega_1 \ge \dots \ge \omega_k >0$.  Define $\mathcal{H}^1 := \big\{f\in L^2(\mu)\,\big|\, \mathbf{E}_\mu(f)=0,\, \langle (-\mathcal{L} f, f\rangle_\mu < +\infty\big\}$.
 We have 
    {\small
  \begin{align}
    \sum_{i=1}^k \omega_i\lambda_i =\min_{f_1,\dots, f_k\in \mathcal{H}^1} \sum_{i=1}^k \omega_i \mathcal{E}(f_i)\,,
    \label{variational-all-first-k}
  \end{align}
    }
    where $\mathcal{E}$ denotes the energy $\eqref{energy-generator}$, and the minimisation is over all $f_1,f_2,\dots, f_k\in \mathcal{H}^1$ such that 
    {\small
  \begin{equation} 
   \langle f_i,f_j\rangle_\mu = \delta_{ij}\,, \quad \forall i,j\, \in \{1,\dots,k\}\,.
    \label{f-ij-constraints}
  \end{equation}
    }
 Moreover, the minimum in \eqref{variational-all-first-k} is achieved when $f_i=\varphi_i$ for $1 \le i \le k$.
  \label{thm-variational-form}
\end{theorem}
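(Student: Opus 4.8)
The plan is to diagonalise the whole problem against the orthonormal eigenbasis $\{\varphi_j\}_{j\ge 0}$ of $L^2(\mu)$ and then peel off the weights $\omega_i$ by an Abel (summation-by-parts) argument, reducing \eqref{variational-all-first-k} to a family of unweighted trace minimisations to which the Courant--Fischer/Ky Fan principle applies. Setting $f=g$ in \eqref{l-selfadjoint} gives $\mathcal{E}(f)=\langle(-\mathcal{L})f,f\rangle_\mu$ on the form domain, so $\mathcal{H}^1$ is exactly the set of mean-zero functions of finite energy. For $f_i\in\mathcal{H}^1$ I would write $f_i=\sum_{j\ge 1}c_{ij}\varphi_j$ (the $j=0$ term drops because $\mathbf{E}_\mu(f_i)=\langle f_i,\varphi_0\rangle_\mu=0$), so that $\mathcal{E}(f_i)=\sum_{j\ge 1}\lambda_j c_{ij}^2$, and the constraint \eqref{f-ij-constraints} becomes $\sum_{j\ge 1}c_{ij}c_{i'j}=\delta_{ii'}$, i.e.\ the rows of the coefficient array are orthonormal in $\ell^2$.

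Next I would use $\omega_1\ge\cdots\ge\omega_k>0$: putting $\omega_{k+1}:=0$ and writing $\omega_i=\sum_{m=i}^k(\omega_m-\omega_{m+1})$, exchanging the order of summation gives
\begin{equation}
  \sum_{i=1}^k\omega_i\,\mathcal{E}(f_i)=\sum_{m=1}^k(\omega_m-\omega_{m+1})\sum_{i=1}^m\mathcal{E}(f_i)\,,
  \label{abel}
\end{equation}
with all coefficients $\omega_m-\omega_{m+1}\ge 0$. It therefore suffices to bound each inner sum from below. Since $f_1,\dots,f_m$ are orthonormal in $L^2(\mu)$, $\sum_{i=1}^m\mathcal{E}(f_i)=\sum_{i=1}^m\langle(-\mathcal{L})f_i,f_i\rangle_\mu$ is precisely the trace of the quadratic form of $-\mathcal{L}$ restricted to the $m$-dimensional subspace $\mathrm{span}(f_1,\dots,f_m)\subset\mathcal{H}^1$; by the Ky Fan minimum principle for the self-adjoint operator $-\mathcal{L}$ with discrete spectrum \eqref{eigenvalues-l}, this trace is at least $\lambda_1+\cdots+\lambda_m$. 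Substituting into \eqref{abel} and running the Abel summation in reverse yields $\sum_{i=1}^k\omega_i\,\mathcal{E}(f_i)\ge\sum_{m=1}^k(\omega_m-\omega_{m+1})(\lambda_1+\cdots+\lambda_m)=\sum_{i=1}^k\omega_i\lambda_i$, the desired lower bound. For the converse, $f_i=\varphi_i$ ($i=1,\dots,k$) satisfies \eqref{f-ij-constraints} and gives $\mathcal{E}(\varphi_i)=\lambda_i$, so the value $\sum_{i=1}^k\omega_i\lambda_i$ is attained and the infimum is a minimum achieved at the eigenfunctions.

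I expect the only delicate point to be the infinite-dimensional bookkeeping: justifying the eigenexpansion of an arbitrary $f_i\in\mathcal{H}^1$ and the identity $\mathcal{E}(f_i)=\sum_j\lambda_j c_{ij}^2$ at the level of the form domain (this is exactly where the hypothesis $\langle(-\mathcal{L})f,f\rangle_\mu<+\infty$ in the definition of $\mathcal{H}^1$ enters), and invoking the Ky Fan trace inequality in the operator rather than the matrix setting — both are standard once $-\mathcal{L}$ is self-adjoint with purely discrete spectrum, but they are what makes the argument rigorous. As a fallback that keeps the proof self-contained, the bound $\sum_{i=1}^m\mathcal{E}(f_i)\ge\lambda_1+\cdots+\lambda_m$ can be obtained by induction on $m$ from the one-dimensional characterisation $\lambda_m=\min\{\mathcal{E}(f):\|f\|_\mu=1,\ \langle f,\varphi_j\rangle_\mu=0,\ j=0,\dots,m-1\}$ together with a dimension count producing, inside $\mathrm{span}(f_1,\dots,f_m)$, a unit vector orthogonal to $\varphi_0,\dots,\varphi_{m-1}$; I would mention this variant since it mirrors the argument in \cite{eigenpde-by-nn-ZHANG} and is the one reused in the proof of Proposition~\ref{prop-1}.
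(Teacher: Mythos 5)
Your argument is correct. The paper does not reproduce a proof of Theorem~\ref{thm-variational-form} in the text but defers to \cite[Theorem 1]{eigenpde-by-nn-ZHANG}, and your route --- expansion in the eigenbasis within the orthogonal complement of $\varphi_0$, Abel summation to strip off the weights $\omega_i$, and the Ky Fan trace bound for each unweighted partial sum (or, equivalently, your inductive fallback via the one-dimensional min--max characterisation and the rotation-invariance of the trace on $\mathrm{span}(f_1,\dots,f_m)$) --- is essentially that argument. The one point worth stating explicitly is that the mean-zero condition in $\mathcal{H}^1$ is what shifts the Ky Fan lower bound from $\lambda_0+\cdots+\lambda_{m-1}$ to $\lambda_1+\cdots+\lambda_m$; since you observe that the $j=0$ coefficient vanishes, this is covered.
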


To apply Theorem~\ref{thm-variational-form} in designing learning algorithms, we use the right hand side of \eqref{variational-all-first-k} as objective and
add penalty term to it in order to incorporate the constraints \eqref{f-ij-constraints}. 
In the end, we obtain the loss function that can be used to learn eigenfunctions of the generator by training neural networks:
    \begin{equation}
      \mathrm{Loss}(f_1, f_2, \dots, f_k) =
      \frac{1}{\beta}\sum_{i=1}^k\omega_i \frac{\mathbf{E}^{\textrm{data}}(|\nabla
      f_i|^2)}{\mathrm{Var}^{\mathrm{data}} (f_i)}  + \alpha\sum_{1\le i_1 \le i_2 \le k} 
	\Big(\mathrm{Cov}^{\mathrm{data}} \big(f_{i_1},f_{i_2} \big) - \delta_{i_1i_2}\Big)^2 \,,
	\label{loss-for-generator}
\end{equation}
  where $\alpha$ is a penalty constant, and $\mathbf{E}^{\mathrm{data}}$,
  $\mathrm{Var}^{\mathrm{data}}$, $\mathrm{Cov}^{\mathrm{data}}$ denote empirical estimators of mean, variance, and co-variance with respect to the measure $\mu$, respectively.
For brevity, we omit further discussions on the loss \eqref{loss-for-generator}, and we refer to \cite[Seciton 3]{eigenpde-by-nn-ZHANG} for more details.

For the transfer operator $\mathcal{T}$, using the same proof of
Theorem~\ref{thm-variational-form} (see the proof of \cite[Theorem 1]{eigenpde-by-nn-ZHANG}) and Lemma~\ref{lemma-energy-transfer} we can prove the following variational characterisation.
  \begin{theorem}
Let $k\in \mathbb{N}$ and $\omega_1 \ge \dots \ge \omega_k >0$. 
    Assume that $\mathcal{T}$ has discrete spectrum consisting of the eigenvalues in \eqref{eigen-transfer-sequence} with the corresponding eigenfunctions $\varphi_i$, $i\ge 0$.
    Define $L^2_0(\mu) := \big\{f\in L^2(\mu)\,\big|\, \mathbf{E}_\mu(f)=0\}$. We have 
  \begin{align}
    \sum_{i=1}^k \omega_i(1-\nu_i)=\min_{f_1,\dots, f_k\in L^2_0(\mu)} \sum_{i=1}^k \omega_i \mathcal{E}_\tau(f_i)\,,
    \label{variational-transfer-all-first-k}
  \end{align}
    where $\mathcal{E}_\tau$ is the energy in \eqref{energy-transfer} associated to $\mathcal{T}$, and the minimisation is over all $f_1,f_2,\dots, f_k\in
    L^2_0(\mu)$ under the constraints \eqref{f-ij-constraints}.
 Moreover, the minimum in \eqref{variational-transfer-all-first-k} is achieved when $f_i=\varphi_i$ for $1 \le i \le k$.
  \label{thm-variational-form-transfer}
\end{theorem}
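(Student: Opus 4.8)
The plan is to reduce Theorem~\ref{thm-variational-form-transfer} to a weighted trace minimization and then to the Ky Fan minimum principle, running in complete parallel to the proof of Theorem~\ref{thm-variational-form} and using Lemma~\ref{lemma-energy-transfer} to bridge the two settings.

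First I would apply Lemma~\ref{lemma-energy-transfer} to rewrite each term of the objective in \eqref{variational-transfer-all-first-k} as $\mathcal{E}_\tau(f_i)=\langle (I-\mathcal{T})f_i,f_i\rangle_\mu$, so that the right-hand side of \eqref{variational-transfer-all-first-k} becomes $\min\sum_{i=1}^k\omega_i\langle A f_i,f_i\rangle_\mu$ over orthonormal systems $f_1,\dots,f_k\in L^2_0(\mu)$ satisfying \eqref{f-ij-constraints}, where $A:=I-\mathcal{T}$. I would then record the relevant properties of $A$: it is bounded (since $\mathcal{T}$, being a conditional expectation, is an $L^2(\mu)$-contraction by Jensen's inequality), self-adjoint (since $\mathcal{T}$ is), and positive semidefinite because $\mathcal{E}_\tau\ge 0$ by \eqref{energy-transfer}; by assumption its spectrum is discrete, equal to $\{1-\nu_i\}_{i\ge0}$, with eigenfunctions $\varphi_i$ that form an orthonormal basis of $L^2(\mu)$. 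Hence $\{\varphi_i\}_{i\ge1}$ is an orthonormal basis of $L^2_0(\mu)$ and $0\le 1-\nu_1\le 1-\nu_2\le\cdots$. Note that, in contrast with the generator case, no domain restriction beyond $L^2_0(\mu)$ is needed here because $A$ is bounded, so $\mathcal{E}_\tau$ is finite on all of $L^2_0(\mu)$.

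Next, for orthonormal $f_1,\dots,f_k\in L^2_0(\mu)$ I would expand $f_i=\sum_{j\ge1}c_{ij}\varphi_j$ with $c_{ij}=\langle f_i,\varphi_j\rangle_\mu$, giving $\langle Af_i,f_i\rangle_\mu=\sum_{j\ge1}(1-\nu_j)c_{ij}^2$. Abel summation in $i$ with the ordered weights $\omega_1\ge\cdots\ge\omega_k$ (set $\omega_{k+1}:=0$) turns the objective into $\sum_{l=1}^k(\omega_l-\omega_{l+1})\,\mathrm{tr}(AP_l)$, where $P_l$ is the orthogonal projection onto $\mathrm{span}\{f_1,\dots,f_l\}$. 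Since $P_l$ has rank $l$ and $A\ge0$ has sorted eigenvalues $1-\nu_1\le1-\nu_2\le\cdots$, the Ky Fan minimum principle gives $\mathrm{tr}(AP_l)\ge\sum_{i=1}^l(1-\nu_i)$; together with $\omega_l-\omega_{l+1}\ge0$ and a reverse Abel summation this yields $\sum_{i=1}^k\omega_i\langle Af_i,f_i\rangle_\mu\ge\sum_{i=1}^k\omega_i(1-\nu_i)$. Equality holds at $f_i=\varphi_i$ for $1\le i\le k$, which establishes both the identity \eqref{variational-transfer-all-first-k} and the attainment claim.

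The only technical point to take care of is the Ky Fan estimate $\mathrm{tr}(AP_l)\ge\sum_{i=1}^l(1-\nu_i)$ in the infinite-dimensional setting, which follows by writing $\mathrm{tr}(AP_l)=\sum_{j\ge1}(1-\nu_j)(P_l)_{jj}$ with $0\le(P_l)_{jj}\le1$ and $\sum_{j\ge1}(P_l)_{jj}=l$, and observing that the weighted sum against the increasing sequence $(1-\nu_j)_j$ is minimized by concentrating unit weight on the $l$ smallest indices. Since this argument, and the whole reduction, is identical to the one carried out for $-\mathcal{L}$ in the proof of Theorem~\ref{thm-variational-form}, the cleanest writeup simply invokes that proof with $-\mathcal{L}$ replaced by $A=I-\mathcal{T}$ on $L^2_0(\mu)$; Lemma~\ref{lemma-energy-transfer} is exactly what makes this substitution legitimate, by supplying the quadratic-form representation of $\mathcal{E}_\tau$.
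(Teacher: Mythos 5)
Your proposal is correct and follows exactly the route the paper intends: the paper itself proves this theorem by invoking Lemma~\ref{lemma-energy-transfer} to write $\mathcal{E}_\tau(f)=\langle (I-\mathcal{T})f,f\rangle_\mu$ and then repeating the proof of Theorem~\ref{thm-variational-form} with $-\mathcal{L}$ replaced by $I-\mathcal{T}$, which is precisely the reduction you carry out (and your Abel-summation/Ky Fan argument is the standard mechanism behind that cited proof). The additional observations you make --- boundedness of $I-\mathcal{T}$ removing the need for a domain restriction, and the infinite-dimensional justification of the trace bound --- are correct and only make the sketch more self-contained.
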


We note that similar variational characterisations for eigenfunctions of transfer operator and Koopman operator have been studied in~\cite{state-free-vampnets,Wu2020}.

As in the case of generator, Theorem~\ref{thm-variational-form-transfer}
motivates the following loss function for learning eigenfunctions of the
transfer operator $\mathcal{T}$~\footnote{For overdamped dynamics~\eqref{sde-x},
we have $\frac{1 - \nu_i}{\tau}=\frac{1-e^{-\tau
\lambda_i}}{\tau} \approx \lambda_i$ when $\tau$ is small, where $\lambda_i$
is the corresponding eigenvalue of the generator. Based on this relation, we
include the constant $\frac{1}{\tau}$ in the first term of
\eqref{loss-for-transfer-operator}. }:

    \begin{equation}
      \mathrm{Loss}_\tau(f_1, f_2, \dots, f_k) =
      \frac{1}{2\tau}\sum_{i=1}^k\omega_i
      \frac{\mathbf{E}^{\textrm{data}}_{x\sim \mu, y\sim p_\tau(\cdot|x)}|f_i(y) - f_i(x)|^2}{\mathrm{Var}^{\mathrm{data}} (f_i)}  + \alpha\sum_{1\le i_1 \le i_2 \le k} 
	\Big(\mathrm{Cov}^{\mathrm{data}} \big(f_{i_1},f_{i_2} \big) - \delta_{i_1i_2}\Big)^2 \,,
	\label{loss-for-transfer-operator}
\end{equation}
  where $\mathbf{E}^{\textrm{data}}_{x\sim \mu, y\sim p_\tau(\cdot|x)}$
  denotes the empirical mean with respect to the joint distribution
  $p_\tau(y|x)d\mu(x)dy$, which can be estimated using time-series data (similar to \eqref{ae-reconstruction-error-tau} in Section~\ref{sec-ae}).

Compared to VAMPnets~\cite{vampnet}, the loss
\eqref{loss-for-transfer-operator} imposes orthogonality constraints
\eqref{f-ij-constraints} explicitly and directly targets the leading
eigenfunctions rather than basis of eigenspaces. Also, as opposed to the
approach in~\cite{state-free-vampnets}, training with either loss \eqref{loss-for-generator} or \eqref{loss-for-transfer-operator} does not require backpropagation on matrix eigenvalue problems.

\section{Encoder as CVs for low-dimensional representation of molecular configurations}
\label{sec-ae}

In this section, we briefly discuss autoencoders in the context of CV identification for molecular dynamics.

An autoencoder~\cite{ae-kramer} on $\mathbb{R}^d$ is a function $f$ that maps
an input data $x \in \mathbb{R}^d$ to an output $y\in \mathbb{R}^d$ by
passing through an intermediate (latent) space $\mathbb{R}^k$, where $1 \le k < d$.  
It can be written in the form $f=f_{dec}\circ f_{enc}$, where $f_{enc}: \mathbb{R}^d\rightarrow \mathbb{R}^k$ and $f_{dec}: \mathbb{R}^k\rightarrow \mathbb{R}^d$ are called an encoder and a decoder, respectively. 
The integer $k$ is called the encoded dimension (resp. bottleneck dimension). In other words, under the mapping of the autoencoder $f$, the input $x$ is first
mapped to a state $z$ in the latent space $\mathbb{R}^k$ by the encoder $f_{enc}$, which is then mapped to $y$ in the original space by the decoder $f_{dec}$.
In practice, both the encoder and the decoder are represented by artificial neural networks (see Figure~\ref{fig-ae}).
Given a set of data $x^{(0)}, x^{(1)}, \dots, x^{(N-1)} \in \mathbb{R}^d$, they are typically trained by minimising the empirical reconstruction error
\begin{equation}
  \mathrm{Loss}^{AE}(f_{enc}, f_{dec}) = \frac{1}{N}\sum_{i=0}^{N-1} |f_{dec}\circ f_{enc}(x^{(i)}) - x^{(i)}|^2\,.
  \label{ae-reconstruction-error}
\end{equation}
  \begin{figure}[h]
     \centering
\includegraphics[width=0.50\linewidth]{./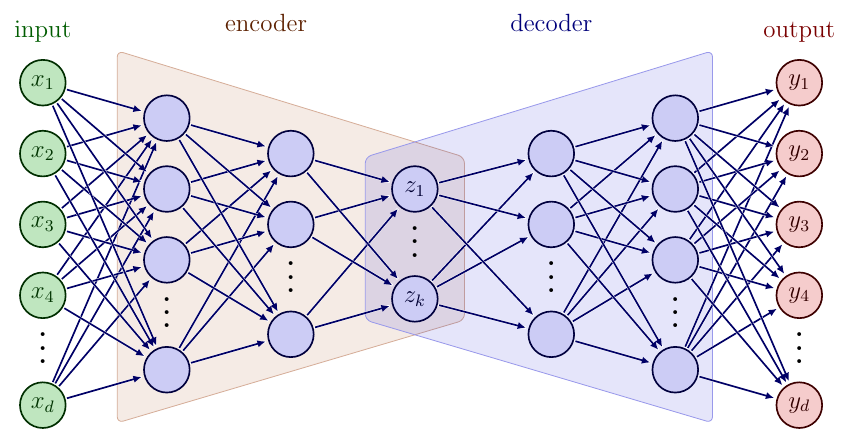}
    \caption{Illustration of autoencoder represented by an artificial neural
    network. An input $x=(x_1,x_2,\dots, x_d)^\top \in \mathbb{R}^d$ is
    first mapped to $z=(z_1,\dots, z_k)^\top\in \mathbb{R}^k$ in the latent
    space, which is then mapped to the output $y=(y_1,y_2,\dots, y_d)^\top \in \mathbb{R}^d$. \label{fig-ae}}
  \end{figure}

In the context of CV identification for molecular systems, the trained encoder
$f_{enc}$ is used to define the CV map, i.e.\ $\xi=f_{enc}$. Note that the
loss \eqref{ae-reconstruction-error} is invariant under reordering of training data.
For trajectory data, instead of \eqref{ae-reconstruction-error} it would be beneficial to employ a loss that incorporates temporal information in the data.
In this regard, several variants, such as time-lagged autoencoders~\cite{time-lagged-ae-noe,time-lagged-ae-ferguson} and the extended autoencoder using
committor function~\cite{extended-ae-bolhuis}, have been proposed in order to
learn low-dimensional representations of the system that can capture its essential dynamics.

\paragraph{Connection with PCA}
An autoencoder can be viewed as a nonlinear generalisation of PCA, which is a widely used technique for dimensionality reduction. 
To elucidate their connection, let us assume without loss of generality that the data satisfies $\frac{1}{N} \sum_{i=0}^{N-1} x^{(i)}=0$ and recall that the PCA algorithm actually solves the optimisation problem 
\begin{equation}
  \min_{U_k} \sum_{i=0}^{N-1} | x^{(i)} - U_kU_k^\top x^{(i)}|^2
  \label{pca-reconstruction-error}
\end{equation}
among matrices $U_k \in \mathbb{R}^{d\times k}$ with $k$ orthogonal unit vectors as columns~\cite[Section 14.5]{hastie2009elements}.
Comparing \eqref{ae-reconstruction-error} and \eqref{pca-reconstruction-error}, it is apparent that autoencoder can be considered as a nonlinear generalisation of PCA and it reduces to PCA when the encoder and decoder are restricted to linear maps given by $f_{enc}(x) := U_k^\top x$ and $f_{dec}(z) := U_k z$ for $x\in \mathbb{R}^d, z\in\mathbb{R}^k$, respectively.

\paragraph{Characterisation of time-lagged autoencoders}
We give a characterisation of the optimal encoder and the optimal decoder in the time-lagged autoencoders~\cite{time-lagged-ae-noe}. 
Assume that the data $x^{(0)}, x^{(1)}, \dots, x^{(N-1)}$ comes from the
trajectory of an underlying ergodic process with invariant measure $\mu$ in \eqref{mu-invariant} sampled at time $i\Delta t$, where $\Delta t>0$ and
$i=0,1,\dots, N-1$. Also assume that, for some $\tau>0$, the state $y$ of the
underlying system at time $\tau$ given its current state $x$ can be described as an ergodic Markov jump process with transition density
$p_{\tau}(y|x)$ (see the discussion on transfer operator in Section~\ref{subsec-operators}).
For simplicity, we assume $\tau=j\Delta t$ for some integer $j>0$. The time-lagged autoencoder is an autoencoder trained with the loss 
\begin{equation}
  \mathrm{Loss}^{AE}_\tau(f_{enc}, f_{dec}) = \frac{1}{N-j}\sum_{i=0}^{N-j-1} |f_{dec}\circ f_{enc}(x^{(i)}) - x^{(i+j)}|^2 \,,
  \label{ae-reconstruction-error-tau}
\end{equation}
which reduces to the standard reconstruction loss \eqref{ae-reconstruction-error} when $j=0$. 

Let us consider the limit of \eqref{ae-reconstruction-error-tau} when
$N\rightarrow +\infty$.  Given the encoder $f_{enc}$ and $z \in \mathbb{R}^k$, denote by $\mu^{f_{enc}}_z$ the conditional measure
on the level set $\Sigma^{f_{enc}}_z:=\{x\in \mathbb{R}^d|f_{enc}(x)=z\}$ (also see \eqref{mu-z}):
\begin{align}
    d\mu^{f_{enc}}_z(x)  =  \frac{1}{Q^{f_{enc}}(z)} \frac{e^{-\beta V(x)}}{Z} \delta(f_{enc}(x)-z) dx\,,
  \label{mu-z-ae}
  \end{align}
 where $Q^{f_{enc}}(z)$ is a normalising constant and satisfies $\int_{z\in\mathbb{R}^k} Q^{f_{enc}}(z) dz=1$.
Using \eqref{mu-z-ae} and ergodicity, we have
\begin{equation}
  \begin{aligned}
    \mathrm{Loss}^{AE}_\tau(f_{enc}, f_{dec}) =& \lim\limits_{N\rightarrow +\infty} \frac{1}{N-j}\sum_{i=0}^{N-j-1} |f_{dec}\circ f_{enc}(x^{(i)}) - x^{(i+j)}|^2  \\
    =& \int_{x\in \mathbb{R}^d}\int_{y\in \mathbb{R}^d} |f_{dec}\circ f_{enc}(x) - y|^2 p_\tau(y|x) d\mu(x) dy\\
    =& \int_{x\in \mathbb{R}^d}\int_{y\in \mathbb{R}^d}\Big[\int_{z\in \mathbb{R}^k} |f_{dec}(z)- y|^2\delta(f_{enc}(x)-z)dz\Big] p_\tau(y|x)  d\mu(x) dy \\
    =& \int_{z\in \mathbb{R}^k}\Big[\int_{y\in \mathbb{R}^d}\int_{x\in \Sigma^{f_{enc}}_z} |f_{dec}(z)- y|^2 p_\tau(y|x) d\mu^{f_{enc}}_z(x) dy\Big] Q^{f_{enc}}(z) dz\\
    =& \int_{z\in \mathbb{R}^k} \left[\mathbf{E}_{y \sim
    \mu^{f_{enc}}_{z,\tau}}|f_{dec}(z)- y|^2\right]  Q^{f_{enc}}(z) dz \\
    =& \mathbf{E}_{z \sim \widetilde{\mu}^{f_{enc}}} \Big[\mathbf{E}_{y \sim \mu^{f_{enc}}_{z,\tau}}|f_{dec}(z)- y|^2\Big] \,,
  \end{aligned}
  \label{ae-reconstruction-error-tau-1}
\end{equation}
where $d\widetilde{\mu}^{f_{enc}}= Q^{f_{enc}}(z) dz$ is a probability measure
on $\mathbb{R}^k$, and we have denoted by $\mu^{f_{enc}}_{z,\tau}$ the probability measure on $\mathbb{R}^d$ defined by 
\begin{equation}
  d\mu^{f_{enc}}_{z,\tau}(y)= \Big(\int_{x\in \Sigma^{f_{enc}}_z} p_\tau(y|x) d\mu^{f_{enc}}_z(x)\Big) dy\,, \quad y \in \mathbb{R}^d\,.
  \label{mu-z-tau}
\end{equation}
Using the simple identity
\begin{equation*}
  \min_{y' \in \mathbb{R}^d} \mathbf{E}_{y \sim \mu^{f_{enc}}_{z,\tau}}|y- y'|^2 = \mathbf{Var}_{y\sim \mu^{f_{enc}}_{z,\tau}} (y)
\end{equation*}
where the right hand side is the variance of $y$ distributed according to $\mu^{f_{enc}}_{z,\tau}$ and the minimum is attained at $y'= \mathbf{E}_{y \sim
\mu^{f_{enc}}_{z,\tau}} (y)$, we can finally write the minimisation of \eqref{ae-reconstruction-error-tau-1} as 
\begin{equation}
  \begin{aligned}
     \min_{f_{enc},f_{dec}} \mathrm{Loss}^{AE}_\tau(f_{enc}, f_{dec}) =& \min_{f_{enc}} \min_{f_{dec}} \mathbf{E}_{z \sim
    \widetilde{\mu}^{f_{enc}}} \Big[\mathbf{E}_{y \sim \mu^{f_{enc}}_{z,\tau}}|f_{dec}(z)- y|^2\Big] \\
    =& \min_{f_{enc}} \mathbf{E}_{z \sim \widetilde{\mu}^{f_{enc}}} \Big[\min_{y'=f_{dec}(z)} \mathbf{E}_{y \sim \mu^{f_{enc}}_{z,\tau}}|y'- y|^2\Big] \\
    =& \min_{f_{enc}} \mathbf{E}_{z \sim \widetilde{\mu}^{f_{enc}}}\Big[\mathbf{Var}_{y\sim \mu^{f_{enc}}_{z,\tau}} (y)\Big] \,.
  \end{aligned}
  \label{ae-reconstruction-error-tau-2}
\end{equation}

Note that \eqref{mu-z-tau} is the distribution of $y$ at time $\tau$ starting from points $x$ on the levelset $\Sigma^{f_{enc}}_z$ distributed
according to the conditional measure $\mu^{f_{enc}}_z$. To summarize, \eqref{ae-reconstruction-error-tau-2} implies that, when $N\rightarrow +\infty$,
training time-lagged autoencoder yields (in theory) the encoder map $f_{enc}$
that minimises the average variance of the future states $y$ (at time $\tau$) of points $x$ on $\Sigma^{f_{enc}}_z$ distributed according to
$\mu^{f_{enc}}_z$, and the decoder that is given by the mean of the future
states $y$, i.e.\ $f_{dec}(z) = \mathbf{E}_{y\sim\mu^{f_{enc}}_{z,\tau}} (y)$ for $z\in \mathbb{R}^k$.
Similar results hold for the standard autoencoder with the reconstruction loss \eqref{ae-reconstruction-error}.
In fact, choosing $\tau=0$ in the above derivation leads to the conclusion that the optimal encoder $f_{enc}$ minimises the average variance of the measures $\mu^{f_{enc}}_z$ on the levelsets.

To conclude, we note that although the loss \eqref{ae-reconstruction-error-tau} in time-lagged autoencoders encodes temporal information of data, 
from the characterisation \eqref{ae-reconstruction-error-tau-2} it is not clear whether this temporal information is sufficient in order to yield encoders that are suitable to define good CVs (in the sense discussed in Section~\ref{sec-eigen}). Our characterisation of time-lagged autoencoders is in line with the previous study on the time-lagged autoencoders~\cite{time-lagged-ae-ferguson}, where 
the authors analysed the capability and limitations of the time-lagged autoencoders in finding the slowest mode of the system, and proposed modifications of time-lagged autoencoders (in order to discover the slowest mode). In the next section, we will further compare autoencoders and eigenfunctions on concrete numerical examples. 

\section{Numerical Examples}
\label{sec-examples}

In this section, we show numerical results of eigenfunctions and autoencoders on two simple two-dimensional systems.
For eigenfunctions, we only consider the transfer operator and the loss~\eqref{loss-for-transfer-operator} due to its simplicity. 
Numerical study on computing eigenfunctions for the generator using the loss
\eqref{loss-for-generator} can be found in \cite{eigenpde-by-nn-ZHANG}. The code for training neural networks is implemented in PyTorch.

\subsection{First example} 
\label{subsec-ex1}

The first system satisfies the SDE \eqref{sde-x} with $\beta=4.0$ and the potential (taken from~\cite{effective_dynamics})
\begin{equation}
  V(x_1, x_2) = (x_1^2-1)^2 + \frac{1}{\epsilon} (x_1^2 + x_2 - 1)^2,  \quad (x_1, x_2)^\top \in \mathbb{R}^2\,,
\end{equation}
where we choose $\epsilon=0.5$. As shown in Figure~\ref{ex1-system-data},
there are two metastable regions in the state space, and the system can
transit from one to the other through a curved transition channel. We sampled
the trajectory of \eqref{sde-x} for $10^5$ steps using Euler-Maruyama scheme with time step-size $\Delta t=0.005$.
The sampled states were recorded every $2$ steps. This resulted in a dataset
consisting of $5\times 10^4$ states, which were used in training neural
networks~\footnote{Note that the empirical distribution of the data (shown in Figure~\ref{ex1-system-data}) slightly differs from the true invariant distribution $\mu$ of the dynamics.
However, there are sufficiently many samples in both metastable regions and also in the transition region.
In particular, the discrepancy between the empirical distribution and the true
invariant distribution is not the main factor that determines the quality of the numerical results.}.

\begin{figure}[h!]
  \begin{subfigure}[c]{0.28\textwidth}
\includegraphics[width=1.0\textwidth]{./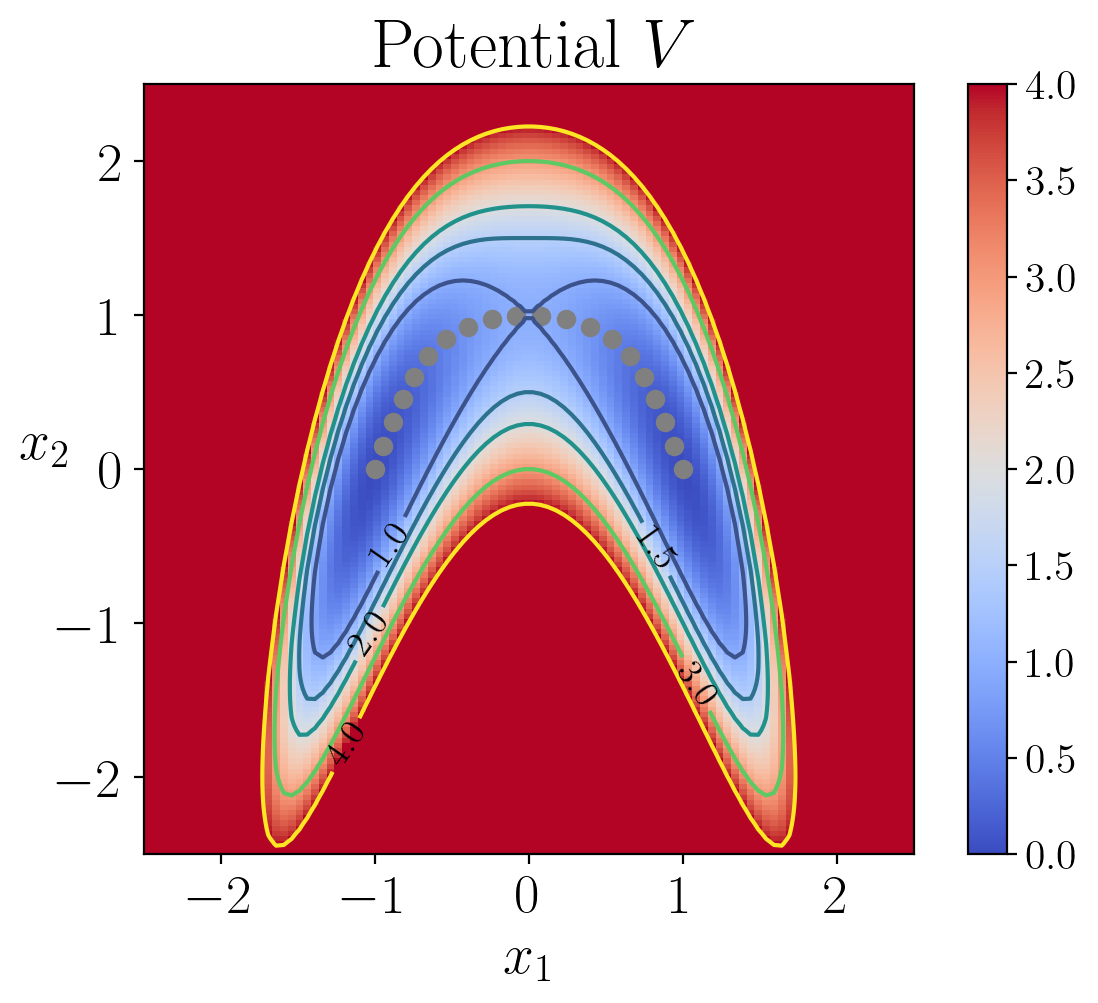}
  \end{subfigure}
  \begin{subfigure}[c]{0.28\textwidth}
\includegraphics[width=1.0\textwidth]{./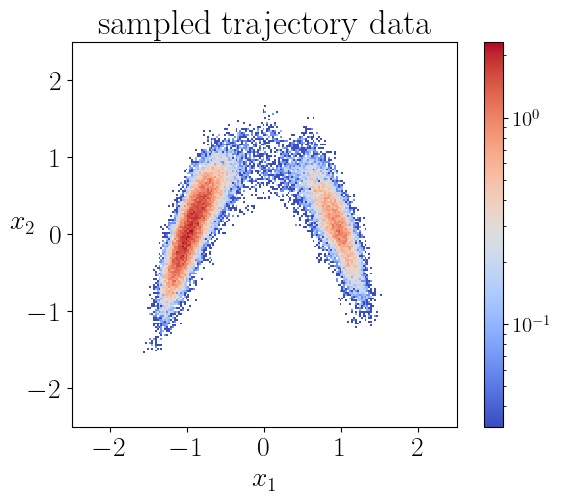}
  \end{subfigure}
  \begin{subfigure}[c]{0.28\textwidth}
\includegraphics[width=1.0\textwidth]{./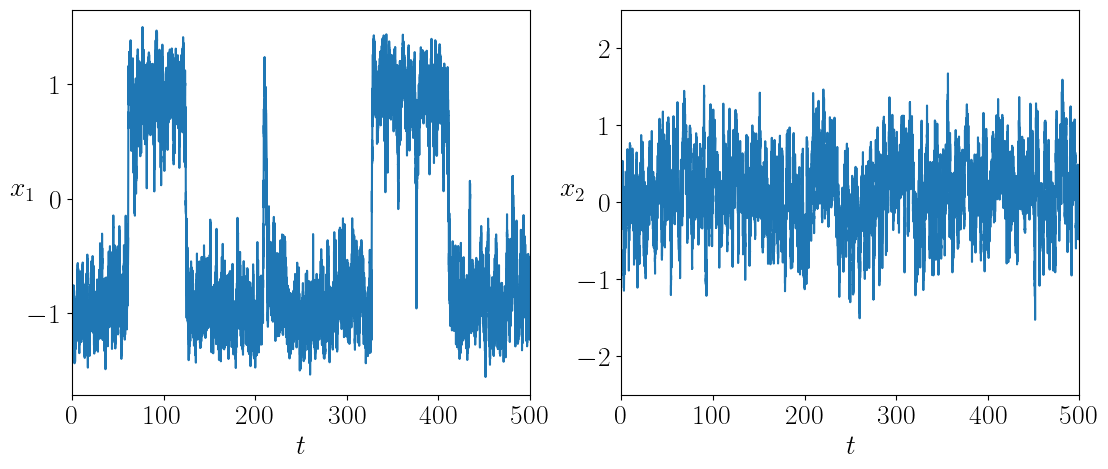}
  \end{subfigure}
\centering
  \caption{First example. Left: potential $V$ of the system and the transition
  path. Middle: histogram of the sampled data. Right: coordinates of sampled data as a function of time. \label{ex1-system-data}}
\end{figure}

We trained neural networks with the loss \eqref{ae-reconstruction-error} for standard autoencoders and the loss \eqref{ae-reconstruction-error-tau} for time-lagged autoencoders. In each test, since the total dimension is $2$, we chose the bottleneck
 dimension $k=1$. The encoder is represented by a neural network that has an
 input layer of size $2$, an output layer of size $1$, and $4$ hidden layers
 of size $30$ each. The decoder is represented by a neural network that has an
 input layer of size $1$, an output layer of size $2$, and $3$ hidden layers
 of size $30$ each. We took \textrm{tanh} as activation function in all neural
 networks. For the training, we used Adam optimiser~\cite{adam-KingmaB14} with batch size $2\times 10^4$ and learning rate $0.005$. 
 The random seed was fixed to be $2046$ and the total number of training epochs
 was set to $500$. Fig.~\ref{ex1-ae} shows the trained autoencoders with different lag-times.
As one can see there, for both the standard autoencoder ($\tau=0.0$) and the time-lagged autoencoder with a small lag-time ($\tau=0.5$), the contour lines of the
trained encoder match well with the stiff direction of the potential. The curves determined by the image of the decoders are also close to the
transition path. However, the results for time-lagged autoencoders become unsatisfactory when the lag-time was chosen as $1.0$ and $2.0$.

We also learned the first eigenfunction $\varphi_1$ of the transfer operator
using the loss \eqref{loss-for-transfer-operator}, where we chose $k=1$, the coefficient $\omega_1=1.0$, lag-time $\tau=1.0$, and the penalty constant $\alpha=10.0$. 
The same dataset and the same training parameters as in the training of
autoencoders were used, except that for the eigenfunction we employed a neural network that has $3$ hidden layers of size $20$ each.
The learned eigenfunction is shown in Figure~\ref{ex1-eigenfunc}. We can see
that the eigenfunction is indeed capable of identifying the two metastable regions and its contour lines are well aligned with the stiff directions of the potential in the transition region (but not inside the metastable regions).

\begin{figure}[h!]
  \begin{subfigure}{0.24\textwidth}
\includegraphics[width=1.0\textwidth]{./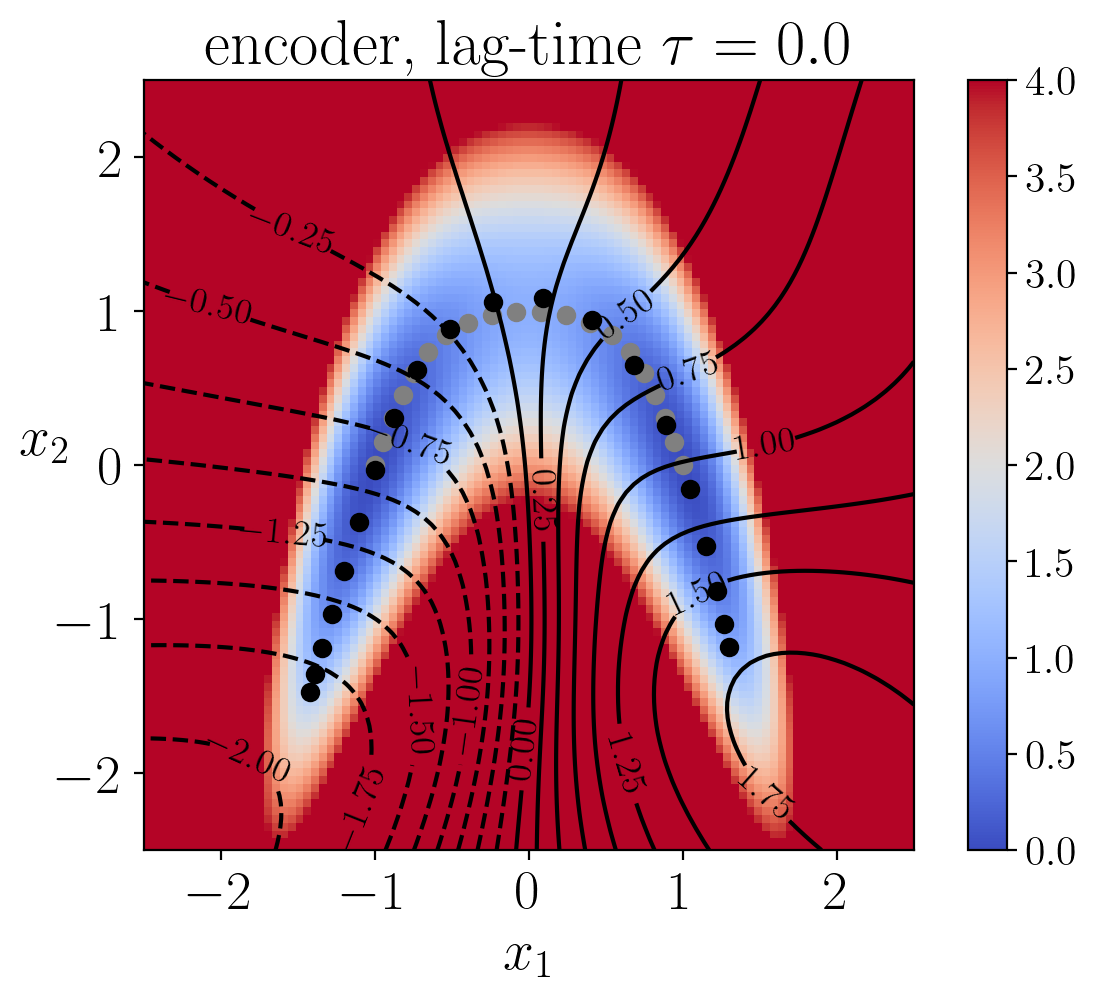}
    \caption{$\tau=0.0$}
  \end{subfigure}
  \begin{subfigure}{0.24\textwidth}
\includegraphics[width=1.0\textwidth]{./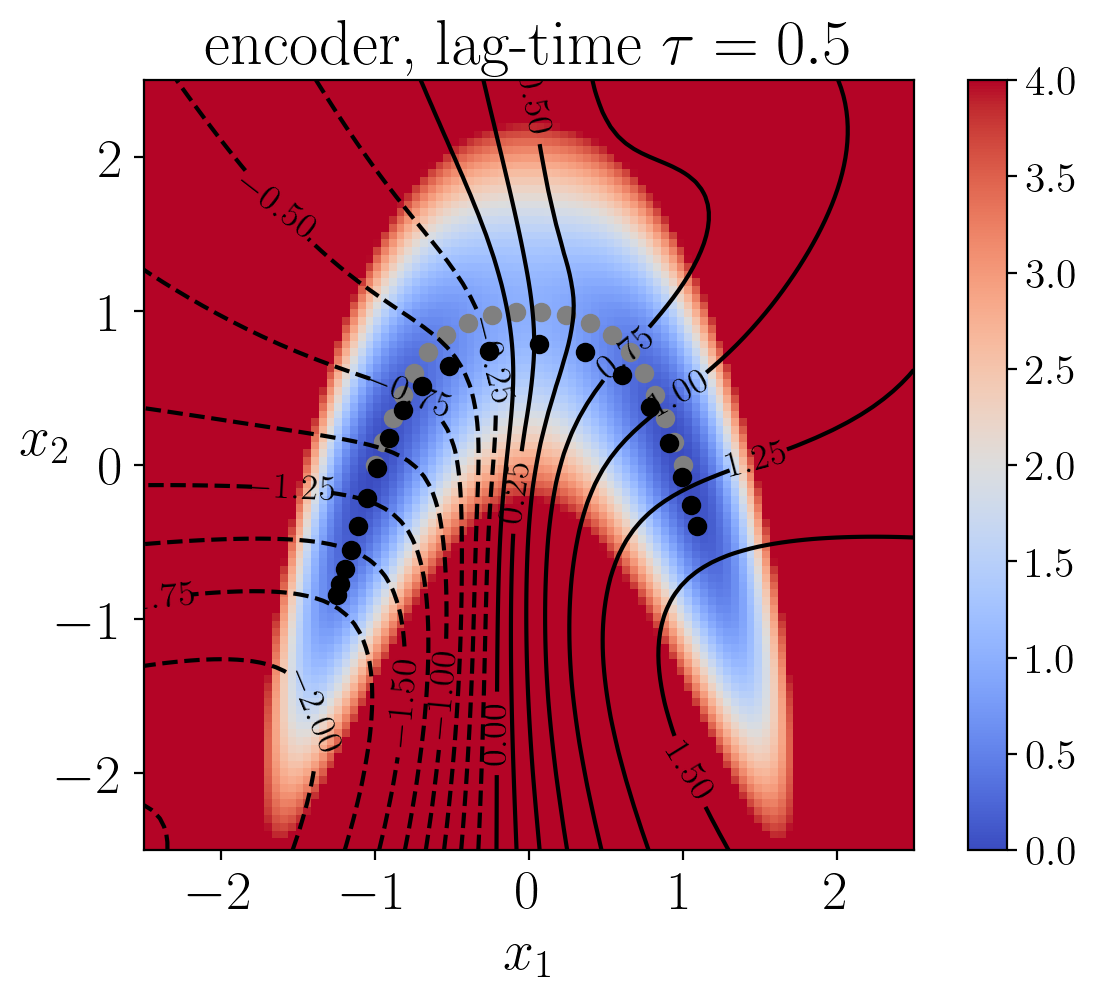}
    \caption{$\tau=0.5$}
  \end{subfigure}
  \begin{subfigure}{0.24\textwidth}
\includegraphics[width=1.0\textwidth]{./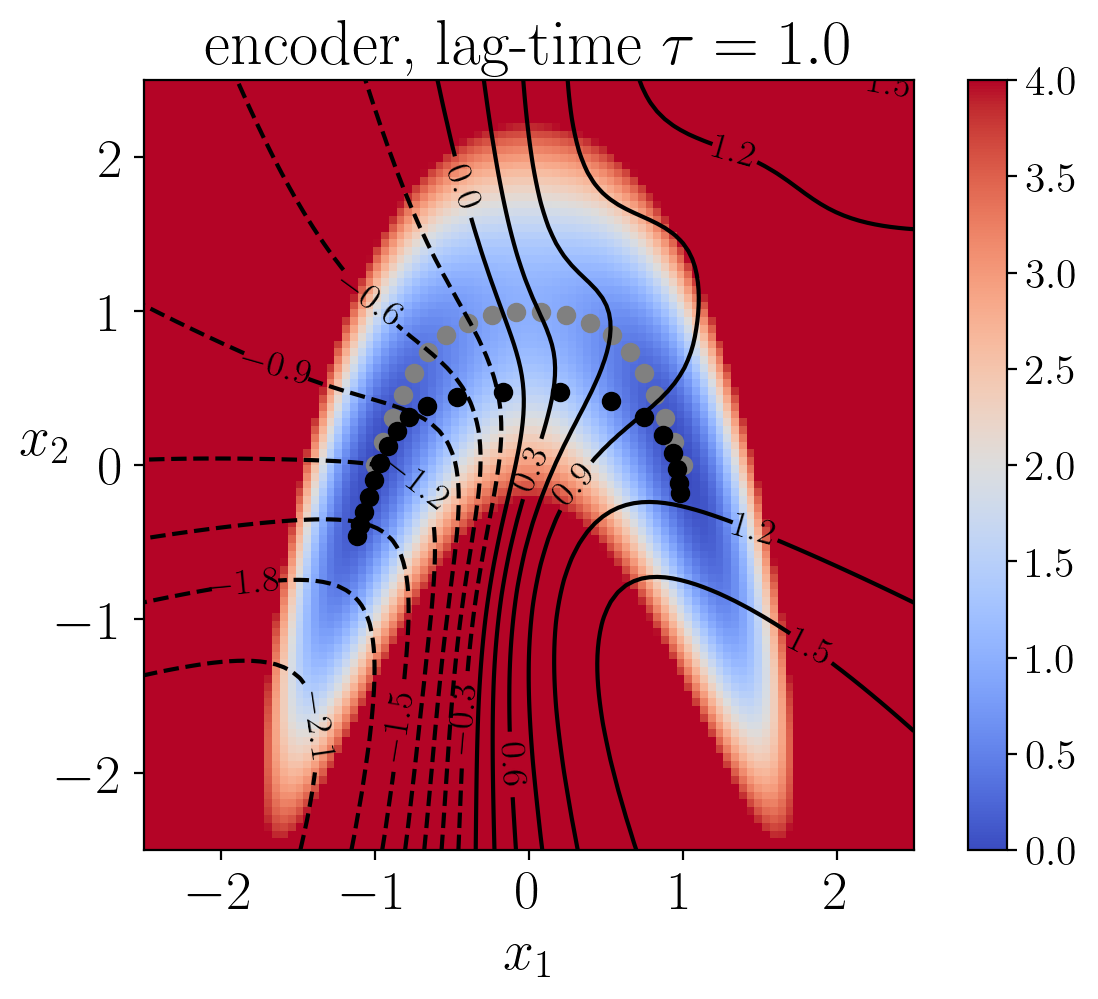}
    \caption{$\tau=1.0$}
  \end{subfigure}
  \begin{subfigure}{0.24\textwidth}
\includegraphics[width=1.0\textwidth]{./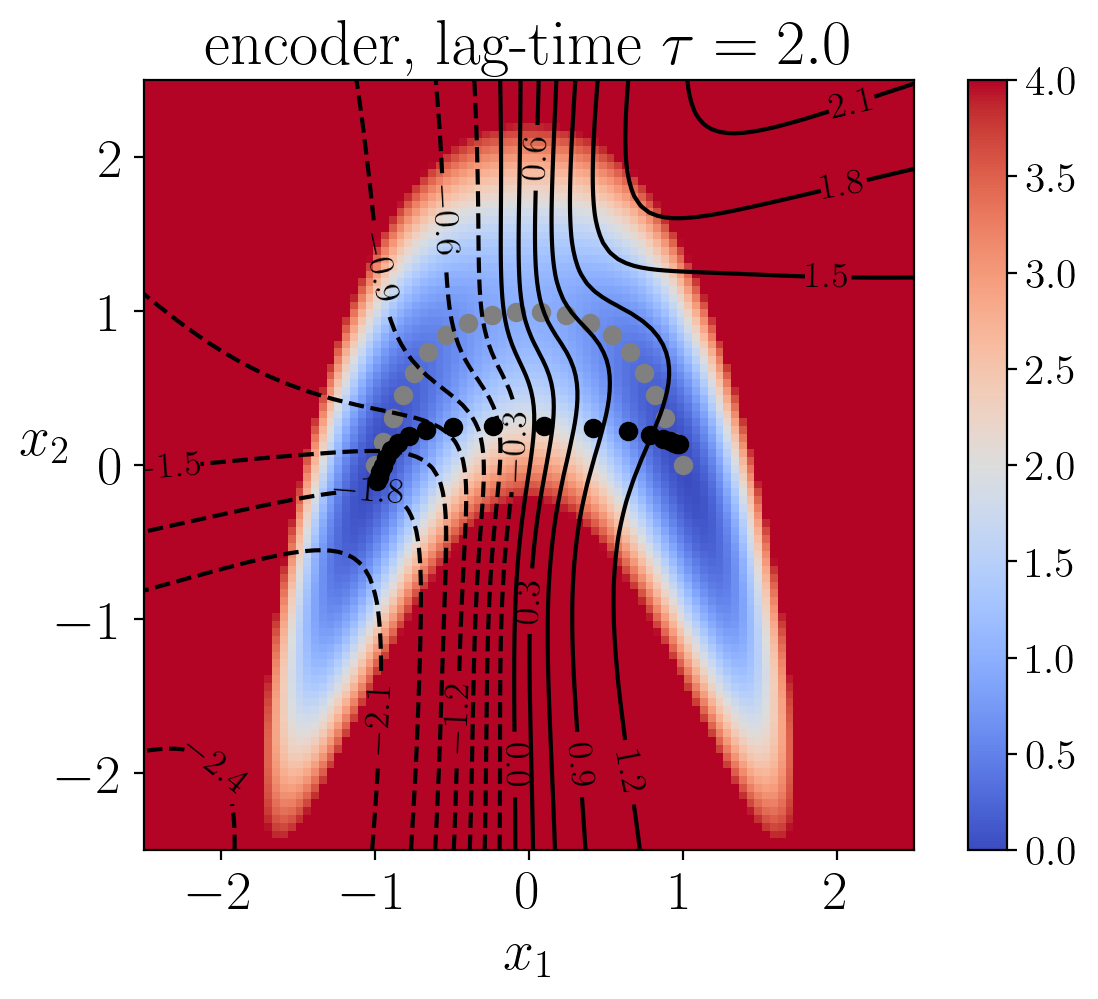}
    \caption{$\tau=2.0$}
  \end{subfigure}
\centering
  \caption{First example. (a) Contour lines of encoder trained with the
  standard reconstruction loss \eqref{ae-reconstruction-error}. (b), (c) and
  (d): Contour lines of encoders trained with the loss \eqref{ae-reconstruction-error-tau} and lag-times $\tau=0.5, 1.0, 2.0$,
  respectively. In each plot, the curve shown in gray dots is the minimal energy path computed using string method~\cite{weinan02}, whereas the curve shown in black dots is the curve given by the image of the trained decoder. \label{ex1-ae}}
\end{figure}

\begin{figure}[h!]
\includegraphics[width=0.28\textwidth]{./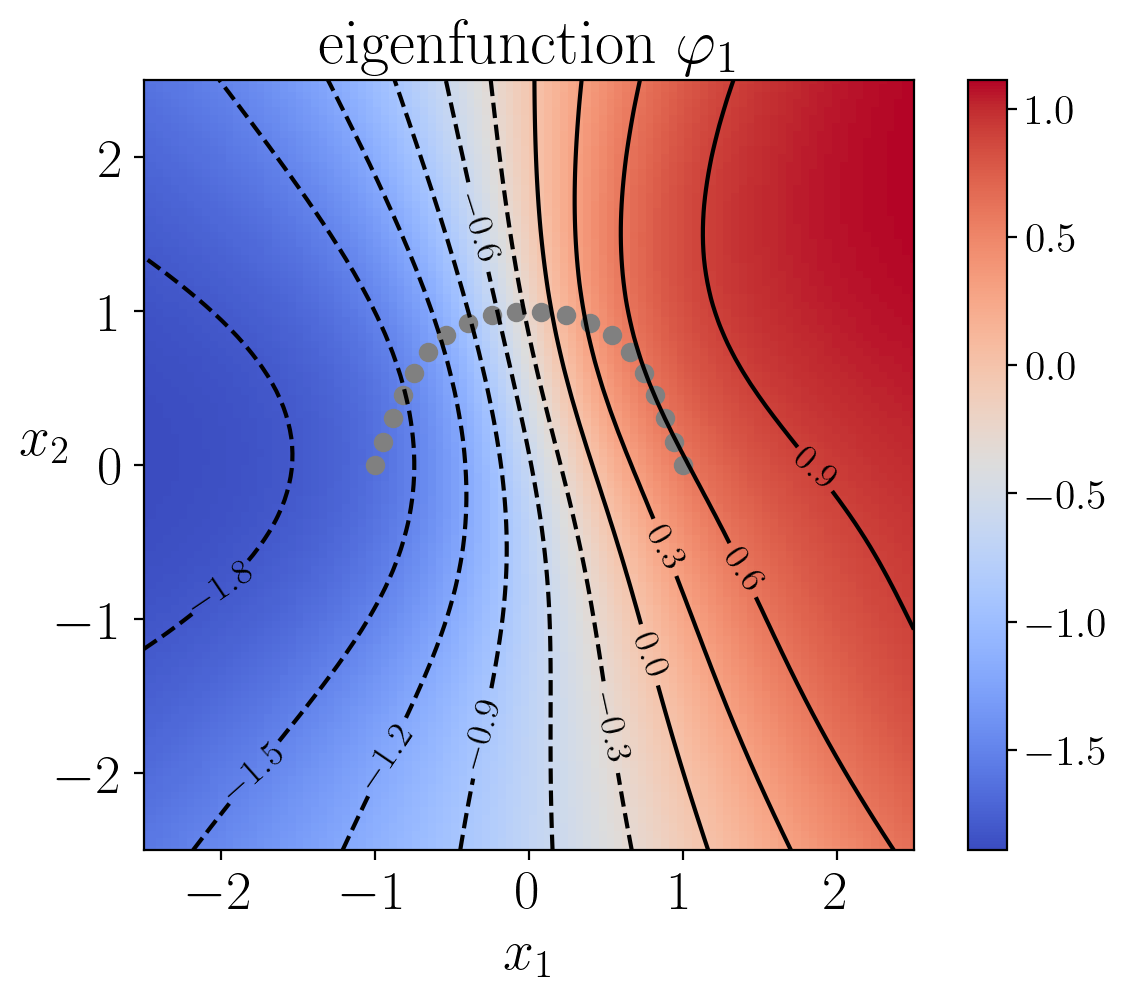}
\centering
  \caption{First example. Eigenfunction $\varphi_1$ of the transfer operator with $\tau=1.0$ trained using the loss
  \eqref{loss-for-transfer-operator}.\label{ex1-eigenfunc}}
\end{figure}

\subsection{Second example} 
\label{subsec-ex2}

In the second example, we consider a system that satisfies the SDE
\eqref{sde-x} with $\beta=1.5$ and the potential 
\begin{equation*}
  V(x_1, x_2) = \frac{e^{1.5 x_2^2}}{1 + e^{5(x_1^2-1)}} - 4 e^{-4 (x_1-2)^2-0.4x_2^2} -5 e^{-4 (x_1+2)^2-0.4x_2^2} + 0.2 (x_1^4 + x_2^4) + 0.5 e^{-2x_1^2} \,,
\end{equation*}
for $(x_1,x_2)^\top\in \mathbb{R}^2$. As shown in
Figure~\ref{ex2-system-data}, there are again two metastable regions. The
region on the left contains the global minimum point of $V$, and the region on the right contains a local minimum point of $V$.

To prepare training data, we sampled the trajectory of \eqref{sde-x} using Euler-Maruyama scheme with the
same parameters as in the previous example, except that in this example we
sampled in total $5 \times 10^5$ steps. By recording states every $2$ steps, we obtained a dataset of size $2.5\times 10^5$.

We learned the autoencoder with the standard reconstruction loss \eqref{ae-reconstruction-error} and the eigenfunction $\varphi_1$ of transfer operator with loss \eqref{loss-for-transfer-operator}, respectively.
For both autoencoder and eigenfunction, we used the same network architectures as in the previous example. We also used the same training parameters, except that in this example a larger batch-size $10^5$ was used and the total number of training epochs was set to $1000$. The lag-time for transfer operator is $\tau=0.5$.  Figure~\ref{ex2-ae-eigenfunc} shows the learned autoencoder and the eigenfunction $\varphi_1$.
As one can see there, since the autoencoder is trained to minimise the reconstruction error and most sampled data falls into the two metastable
regions, the contour lines of the learned encoder match the stiff directions of the potential in the metastable regions, but the transition region is poorly characterised.
On the contrary, the learned eigenfunction $\varphi_1$, while being close to constant inside the two metastable regions, gives a good parameterisation of the transition region. 
We also tried time-lagged autoencoders with lag-time $\tau=0.5$ and $\tau=1.0$
(results are not shown here). But, we were not successful in obtaining satisfactory results as compared to the learned eigenfunction in Figure~\ref{ex2-ae-eigenfunc}.

\begin{figure}[h!]
  \begin{subfigure}[c]{0.28\textwidth}
\includegraphics[width=1.0\textwidth]{./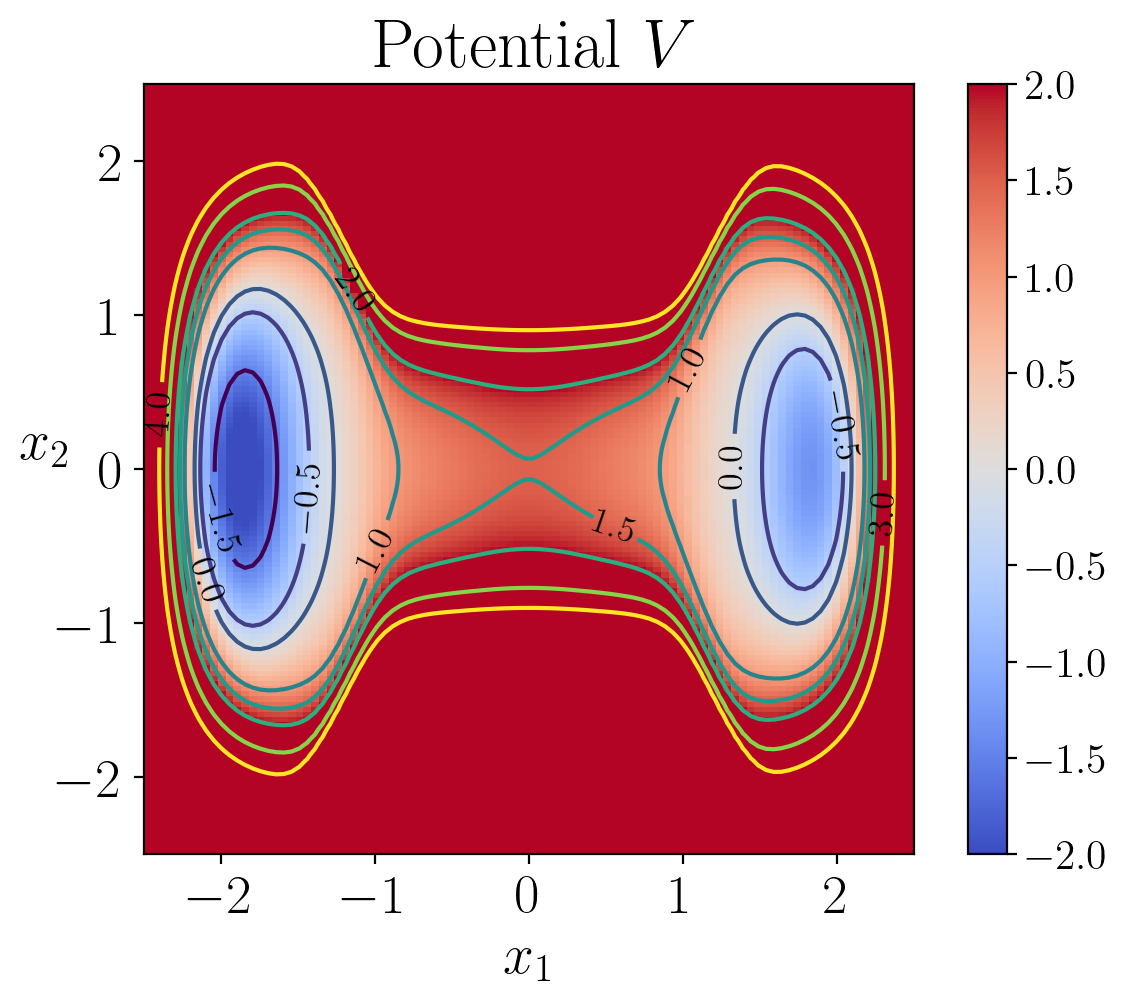}
  \end{subfigure}
  \begin{subfigure}[c]{0.28\textwidth}
\includegraphics[width=1.0\textwidth]{./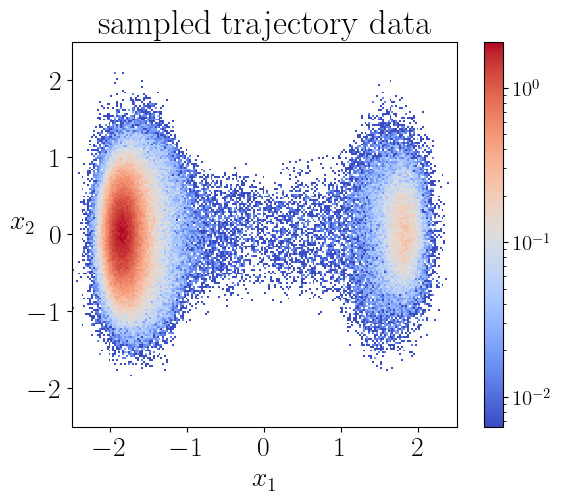}
  \end{subfigure}
  \begin{subfigure}[c]{0.28\textwidth}
\includegraphics[width=1.0\textwidth]{./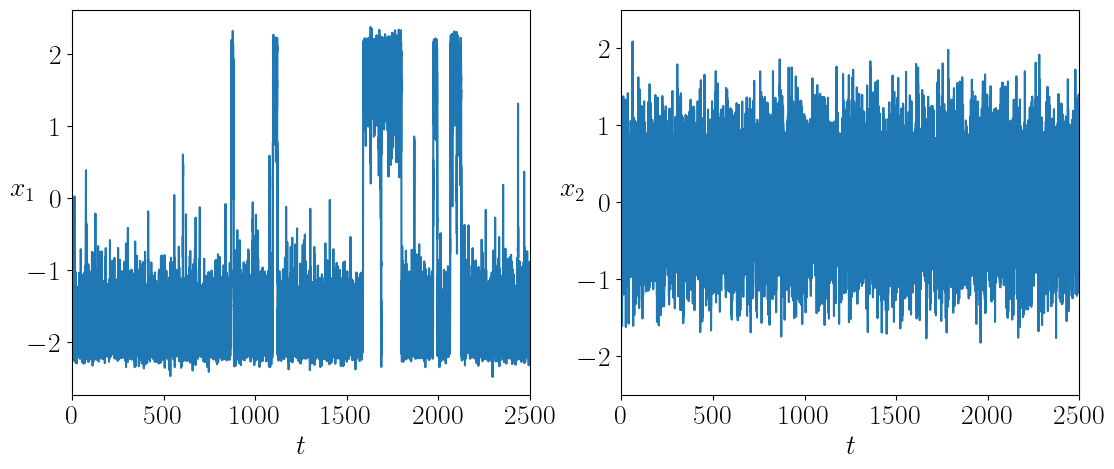}
  \end{subfigure}
\centering
  \caption{Second example. Left: potential $V$ of the system. Middle:
  histogram of the sampled data. Right: coordinates of the sampled data as a function of time (trajectory). \label{ex2-system-data}}
\end{figure}

\begin{figure}[h!]
  \begin{subfigure}{0.28\textwidth}
\includegraphics[width=1.0\textwidth]{./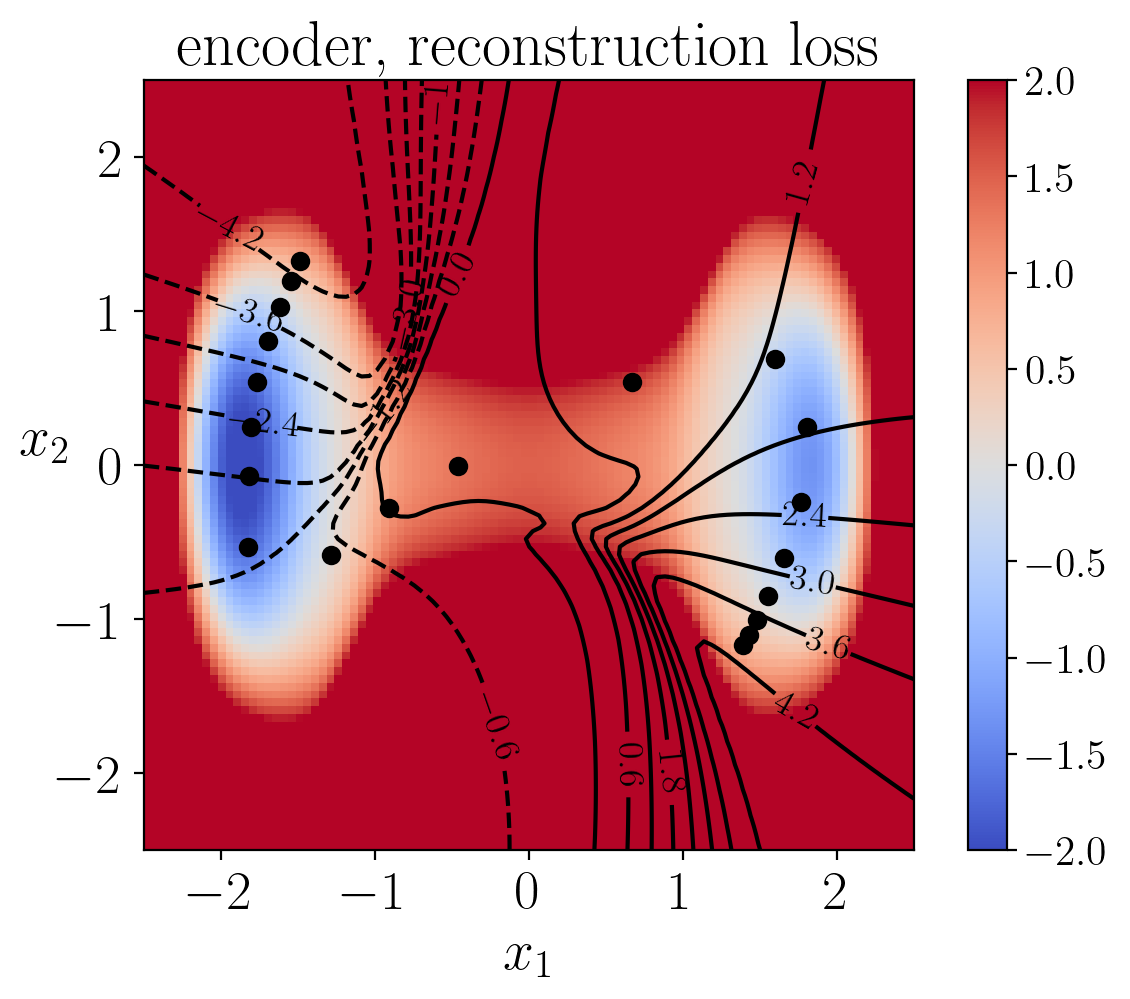}
  \end{subfigure}
  \begin{subfigure}{0.28\textwidth}
\includegraphics[width=1.0\textwidth]{./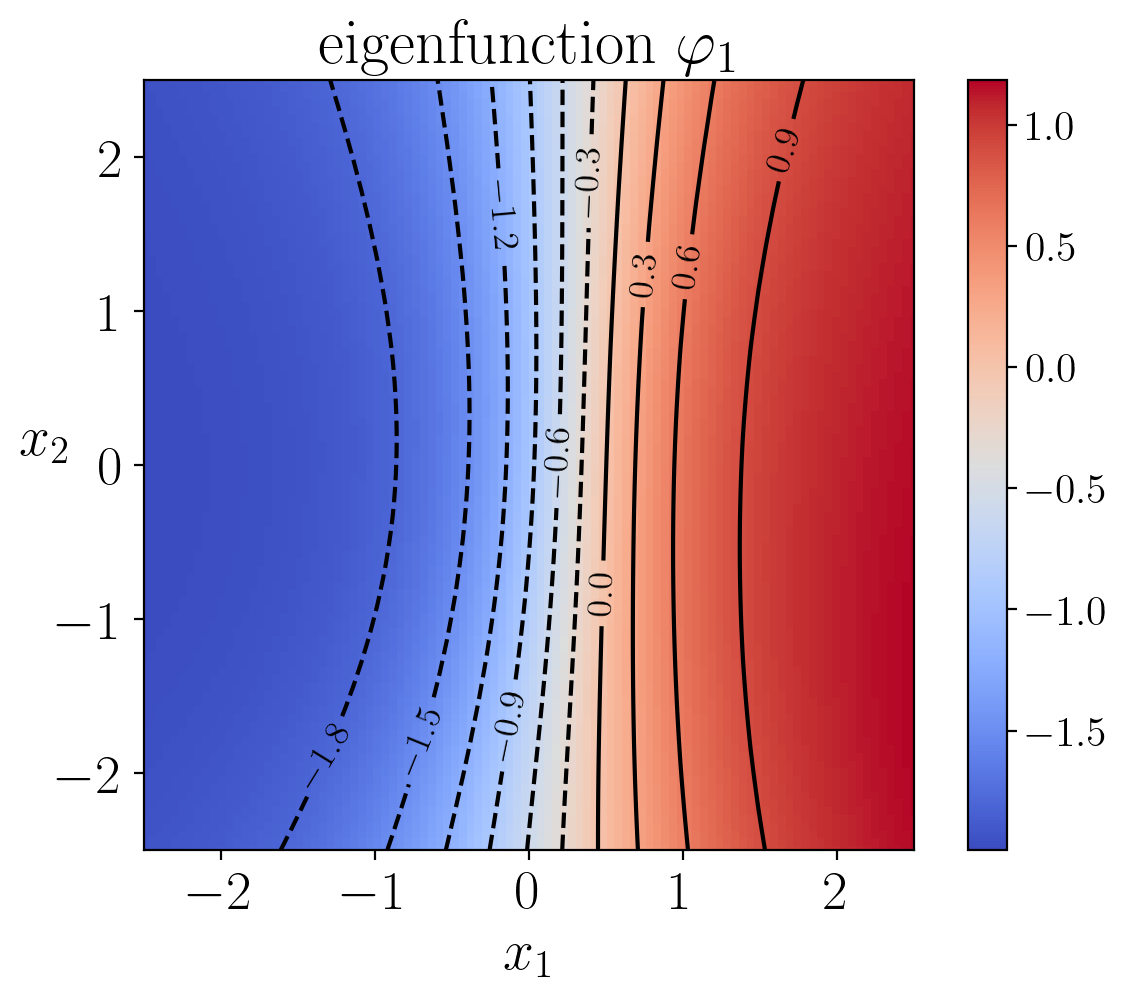}
  \end{subfigure}
\centering
  \caption{Second example. Left: Contour lines of the learned encoder map and the learned decoder curve (represented by black dots) are shown on top of the potential profile. Right: Eigenfunction
  $\varphi_1$ of the transfer operator with $\tau=0.5$ trained using the loss
  \eqref{loss-for-transfer-operator}.\label{ex2-ae-eigenfunc}}
\end{figure}

\begin{acknowledgement}
W.\ Zhang thanks Tony Leli\`evre and Gabriel Stolz for fruitful discussions on autoencoders. 
The work of C.\ Sch\"utte and W.\ Zhang is supported by the DFG under Germany's Excellence Strategy-MATH+: The Berlin Mathematics Research Centre (EXC-2046/1)-project ID:390685689.
\end{acknowledgement}

\appendix
\section{Proofs of Lemma~\ref{lemma-energy-transfer} and Lemma~\ref{lemma-nice-identity}}
\label{appsec-proofs}

\begin{proof}[Proof of Lemma~\ref{lemma-energy-transfer}]
  Applying the detailed balance condition and the second identity in \eqref{transfer-operator-and-semigroup}, we can derive 
  \begin{equation*}
    \begin{aligned}
      \mathcal{E}_\tau(f) =& \frac{1}{2} \int_{\mathbb{R}^d} \int_{\mathbb{R}^d} \big(f(y) - f(x))^2 p_\tau(y|x) \pi(x) dx dy \\
      =& \frac{1}{2}\int_{\mathbb{R}^d} \int_{\mathbb{R}^d} \big(f(y)^2 - 2f(x)f(y) + f(x)^2\big) p_\tau(y|x) \pi(x) dx dy \\
      =& \int_{\mathbb{R}^d} \int_{\mathbb{R}^d} f(x)^2 \pi(x) dx - \int_{\mathbb{R}^d} \int_{\mathbb{R}^d} f(x)f(y)  p_\tau(y|x) \pi(x) dx dy \\
      =& \int_{\mathbb{R}^d} \big[(I-\mathcal{T})f(x)\big] f(x) d\mu(x)\\
      =& \langle (I-\mathcal{T})f, f\rangle_\mu\,.
    \end{aligned}
    \end{equation*}
\end{proof}

\begin{proof}[Proof of Lemma~\ref{lemma-nice-identity}]
  It is straightforward to verify the identity (Bochner's formula)
  $\frac{1}{2} \Delta |\nabla f|^2 = \nabla(\Delta f) \cdot \nabla f + |\nabla^2 f|_F^2$, 
    where $\nabla^2 f$ denotes the matrix with entries $\frac{\partial^2
  f}{\partial x_i \partial x_j}$ for $1 \le i,j \le d$ and $|\nabla^2 f|_F$ is
  its Frobenius norm. Using this identity, together with \eqref{generator-l} and \eqref{l-selfadjoint}, we can derive 
    \begin{align*}
       \int_{\mathbb{R}^d} |\mathcal{L}f|^2 d\mu =&-\frac{1}{\beta} \int_{\mathbb{R}^d} \nabla f\cdot \nabla (\mathcal{L}f) d\mu\\
      =&-\frac{1}{\beta} \int_{\mathbb{R}^d} \nabla f\cdot \nabla (-\nabla V
      \cdot \nabla f + \frac{1}{\beta} \Delta f) d\mu \\
       =&\frac{1}{\beta} \int_{\mathbb{R}^d} \Big[\mbox{Hess}V(\nabla f,
       \nabla f) + \frac{1}{2}\nabla |\nabla f|^2\cdot \nabla V -
       \frac{1}{\beta} \nabla f \cdot \nabla \Delta f\Big] d\mu \\
       =&\frac{1}{\beta} \int_{\mathbb{R}^d} \Big[\mbox{Hess}V(\nabla f,
       \nabla f) + \frac{1}{2}\nabla |\nabla f|^2\cdot \nabla V -
       \frac{1}{\beta} \Big(\frac{1}{2} \Delta |\nabla f|^2 - |\nabla^2 f|^2_F \Big)\Big] d\mu \\
       =&\frac{1}{\beta} \int_{\mathbb{R}^d} \Big[\mbox{Hess}V(\nabla f,
       \nabla f) - \frac{1}{2}\mathcal{L} (|\nabla f|^2) +\frac{1}{\beta}
       |\nabla^2 f|^2_F \Big] d\mu \\
       =&\frac{1}{\beta} \int_{\mathbb{R}^d} \Big[\mbox{Hess}V(\nabla f,
       \nabla f) + \frac{1}{\beta}|\nabla^2 f|_F^2 \Big] d\mu \,,
    \end{align*}
  where the last equality follows from the fact that $\int \mathcal{L} |\nabla f|^2 d\mu = 0$.
\end{proof}

\bibliographystyle{siamplain}
\bibliography{reference}

\end{document}